%% file: main.tex
\documentclass[accepted]{uai2026}
                        
\usepackage[british]{babel}
\usepackage[sort]{natbib}
\usepackage{mathtools}
\usepackage{booktabs}
\usepackage{subcaption}
\usepackage{dblfloatfix}
\usepackage{tikz}
\usepackage{amssymb}
\usepackage{amsthm}
\usepackage{algorithm}
\usepackage{algpseudocode}
\usepackage{cleveref}

\input{def}

\makeatletter
\renewcommand{\notice@text}{\textit{Accepted for \UAI@long} (\UAI@short).}
\makeatother

\title{Bayesian Experimental Design via Score Matching}

\author[1]{\href{mailto:<angus.phillips@stats.ox.ac.uk>?Subject=Your UAI 2026 paper}{Angus Phillips}{}}
\author[1]{Gavin Kerrigan}
\author[1]{Tom Rainforth}
\affil[1]{%
    Department of Statistics\\
    University of Oxford
}
  
\begin{document}
\maketitle

\begin{abstract}
      Policy-based approaches to Bayesian experimental design (BED) allow the learning of deep policy networks that adaptively make intelligent design decisions based on previously collected data. However, the training of such policies is often held back by a fundamental challenge: the double intractability of the expected information gain (EIG). This necessitates expensive or complex approximations that restrict the effort one can invest in optimising the policy itself. To address this, we show that the double intractability of the EIG can be isolated from the policy learning by first solving a score matching problem that is independent of the policy used, then using the learned score approximation to train the policy in a singly intractable manner. This turns the key multiplicative cost into an additive one and reduces the computational burden on the policy training itself, making it far cheaper to train the policy multiple times when needed, e.g. for architecture search, hyperparameter tuning, or avoiding local optima. In our experiments we train multiple competitive policies without inducing a multiplicative cost in likelihood evaluations, which can increase performance by allowing us to select the best policy even without performing hyperparameter or architecture searches.   
\end{abstract}

\vspace{-6pt}
\section{Introduction}
\vspace{-2pt}

\looseness=-1
The optimal design of experiments is critical for conducting informative and cost-effective scientific experiments in any field. Bayesian Experimental Design (BED) \citep{lindley_measure_1956, chaloner_bayesian_1995,rainforth_modern_2024,huan_optimal_2024} offers a theoretically principled approach to selecting experimental designs grounded in Bayesian decision theory that has found successful application in fields as broad as physics \citep{dushenko_sequential_2020, mcmichael_sequential_2021}, psychology \citep{watson_quest_2017}, quantum computing \citep{sarra_deep_2023}, machine learning \citep{gal_deep_2017} and beyond. Typically, BED is performed by maximising the expected information gain (EIG) of the experiment(s)~\citep{lindley_measure_1956}, where the information gain is given by the reduction in Shannon entropy \citep{shannon_mathematical_1948} from our prior to posterior beliefs in some target quantities of interest, and the expectation is over possible data we might see.

Of particular importance in many applications is the ability to propose the next design in real time, such as conducting a live survey with human participants \citep{pasek_optimizing_2010,vincent_darc_2017} or learning via control of a live dynamical system \citep{iollo_bayesian_2025}. Unfortunately, this can be difficult in practice, as the EIG is a doubly intractable quantity that is very challenging to optimise~\citep{rainforth_nesting_2018,rainforth_modern_2024}, while the need to repeatedly do inference between iterations can also be problematic.

\looseness=-1
Policy-based BED \citep{foster_deep_2021,shen_bayesian_2023,hedman_stepdad_2025,ivanova_implicit_2021,blau_optimizing_2022,iqbal_nesting_2024,bracher_jadai_2025} offers the most viable approach to overcoming this, by amortising adaptive design decisions into a pre-trained policy which can be deployed in a single forward pass. However, the problem of optimising a doubly intractable objective is now transferred to the policy training itself, where the challenges can actually be compounded due to the complexity of policy learning, which often forms a significantly harder optimisation than just optimising designs~\citep{blau_optimizing_2022}.  Furthermore, variational approaches to side-stepping the double intractability~\citep{foster_variational_2019,foster_unified_2020,kleinegesse_efficient_2019} are often challenging to apply due to the difficulty of learning accurate variational distributions in high-dimensional settings. Existing approaches often thus simply resort to nested or contrastive sampling of the EIG gradient, accepting the substantial computational costs this induces.

To address this, we show that it is possible to disentangle the double intractability of the EIG from the training of a policy, utilising the fact that the information gain itself depends on the policy only through the chosen designs and not the policy parameters themselves. Specifically, we derive a reparameterised form of the EIG gradient where the double intractability is isolated in two terms with no direct dependency on the policy: the Stein score and Fisher score of the marginal likelihood given the full design rollout. We then use this to introduce a score matching approach to BED, which we call \scorebed. \scorebed first learns a single network which approximates both of these scores using an objective we call marginal score matching, then substitutes the learned score network into our reparameterised EIG gradient, yielding a singly intractable estimator that can be used to train the policy via stochastic gradient descent.

A key benefit of our two-stage setup is that it transforms the typically multiplicative cost of nested sampling procedures into an additive cost of two singly intractable problems, allowing gradient-based policy training without expensive nested sampling.  Critically, the approximations we learn are entirely independent of any policy and thus the same upfront approximation can be reused repeatedly to allow sample-efficient and scalable training for any number of policies. This provides substantial benefits to policy training by allowing cheaper hyperparameter and architecture searches, longer policy training, or multiple retrainings to avoid local optima, all of which can be important in practice due to the difficulty of policy training~\citep{ivanova_implicit_2021}. 

Experimentally, we find that, when matching total budgets, \scorebed provides competitive performance compared to existing approaches that simultaneously deal with double intractability of the EIG and policy training. Moreover, as only a small proportion of the budget in \scorebed is on the policy training itself, it can cheaply train multiple such competitive policies, thereby paving the way for more nuanced and computationally intensive policy training schemes.

\vspace{-3pt}
\section{Background}
\vspace{-2pt}

\textbf{Bayesian Experimental Design.}~ In BED, one assumes a Bayesian model for an experiment consisting of prior $p(\theta)$ and likelihood $p(y \mid \theta, \xi)$ where $\xi$ denotes the design, $y$ is the experiment outcome, and $\theta$ are the target variables we wish to learn about. The goal is to select $\xi$ to maximise some expected utility function under the data generating process. The standard choice of utility function in BED literature is the information gain on model parameters, defined by \citet{lindley_measure_1956} as the reduction in Shannon's entropy from the prior to the posterior, $\IG_\theta(y, \xi) = \mathbb{H}[p(\theta)] - \mathbb{H}[p(\theta\mid y, \xi)]$. Taking expectation under the data generating process yields the EIG objective $\mathcal{I}(\xi) = \E_{p(y\mid \xi)}\{\IG_\theta(y, \xi)\}$. 

We consider a sequential setting with $T$ rounds of experiments, therefore let $h_t = \{y_s, \xi_s\}_{s=1}^t$ denote the experimental history to time $t$. Following \citet{foster_deep_2021}, we learn an adaptive policy network $\pi_\phi(\xi_t\mid h_{t-1})$ which is trained to propose the optimal next design given $h_{t-1}$ on the fly. There are several advantages of such a policy-based approach \citep{foster_deep_2021, rainforth_modern_2024}, including significantly faster deployment times versus greedy approaches, avoiding the need for potentially inaccurate inference at each time step, and making non-myopic decisions by optimising up front for the total information gain over the entire experiment. In this policy-based setting, we will write the data generating process under the policy $\pi_\phi$\footnote{When $\pi_\phi(h_{t-1})$ is deterministic, this distribution becomes degenerate. We ignore the measure-theoretic considerations in this case and continue to write $p(y\mid\xi)$ even when $\xi$ is deterministic.} as $p_\phi(\y, \x) = \int p(\theta) p_\phi(\y, \x \mid \theta) \rm{d}\theta$ where
\begin{equation*}
    p_\phi(\y, \x\mid\theta) =\prod_{t=1}^T \pi_\phi(\xi_t \mid h_{t-1}) p(y_t\mid\theta, \xi_t, h_{t-1}).
\end{equation*}
\citet{foster_deep_2021} propose optimising the total expected information gain over the whole experiment, therefore giving the policy-based EIG objective:
\begin{equation} 
    \mathcal{I}_T(\phi) = \E_{p_\phi(y_{1:T}, \xi_{1:T})}\{\IG_\theta(y_{1:T}, \xi_{1:T})\},
\end{equation}
where $\IG_\theta(y_{1:T}, \xi_{1:T}) = \mathbb{H}[p(\theta)] - \mathbb{H}[p(\theta\mid y_{1:T}, \xi_{1:T})]$. 

Policy-based BED is therefore concerned with finding the optimal policy $\phi^* = \argmax_\phi \mathcal{I}_T(\phi)$. We take a stochastic gradient-based approach to this problem, although previous works have also searched for designs using Bayesian optimisation~\citep{snoek_practical_2012}, such as~\citet{kleinegesse_efficient_2019, foster_variational_2019}. %This does preclude us from considering problems with discrete design spaces. 

\textbf{Approximating the EIG.}~ A key challenge in BED comes from the double intractability of the objective. The EIG can be written either as an expected KL divergence between the posterior and prior: 
\begin{equation*}
    \mathbb{E}_{p(\theta) p_\phi(\y, \x\mid \theta)}\{\log p(\theta\mid \y, \x) - \log p(\theta)\},
\end{equation*} 
or as the mutual information between $\y$ and $\theta$ given $\x$: \begin{equation*}
\label{eqn:mi_eig}
\mathbb{E}_{p(\theta) p_\phi(\y, \x\mid \theta)}\left\{\log \frac{p(\y\mid \theta, \x)}{p(\y \mid  \x)}\right\}.
\end{equation*}
In both cases it is clear the objective is doubly intractable as it involves evaluating the posterior or marginal distributions which appear non-linearly within the outer expectation.

\citet{foster_unified_2020} introduce the Prior Contrastive Estimation (PCE) bound which lower bounds the EIG. This bound uses $M$ contrastive samples from the prior to approximate the intractable marginal distribution:
\begin{equation*}
    \mathcal{I}_T(\x) \geq \mathbb{E}\left\{\log \frac{p(\y\mid \theta_0, \x)}{\frac{1}{M+1}\sum_{m=0}^M p(\y\mid \theta_m, \x)}\right\}.
\end{equation*}
\looseness=-1
The inclusion of $\theta_0$ in the estimate of the intractable marginal reduces the variance of the estimate compared with ordinary nested Monte Carlo (NMC). The PCE estimator can be differentiated using a reparameterisation trick~\citep{kingma_autoencoding_2013} or a REINFORCE gradient estimator~\citep{williams_simple_1992a}, allowing gradient-based optimisation. 

An alternative approach is to construct a functional or variational approximation of the intractability in the EIG. \citet{foster_variational_2019} propose replacing the intractable posterior with a variational approximation $q(\theta\mid \y, \x)$, which they show results in a lower bound of the EIG due to the Barber-Agakov bound \citep{barber_information_2003}. Alternatively, they show that using a variational approximation of the intractable marginal $q(\y\mid \x)$ leads to an upper bound on the EIG. \citet{foster_unified_2020} further propose a unified procedure which co-optimises the variational approximation and the designs, meaning the approximation is specialised to the particular policy network.

\textbf{Score Matching.}~ Score matching \citep{hyvarinen_estimation_2005} refers to techniques used to approximate the Stein score $\nabla_z \log p(z)$ of a distribution with density $p(z)$ by minimising the explicit score matching objective $\frac{1}{2}\E_z\{\|s_\psi(z) - \nabla \log p(z)\|^2\}$. Being typically intractable, the target score is unavailable to us, therefore equivalent tractable objectives have been proposed \citep{hyvarinen_estimation_2005, vincent_connection_2011, song_sliced_2019} such as the sliced score matching objective:
\begin{equation}
    \mathcal{J}_{\text{SSM}}(\psi) = \mathbb{E}_{\nu} \mathbb{E}_{z}\big\{v^T \nabla_z s_\psi(z)v + \frac{1}{2}(v^T s_\psi(z))^2\big\}
\end{equation}
which only requires samples of $z$.

Score matching has achieved particular success in denoising diffusion generative models \citep{song_generative_2019, ho_denoising_2020}, where a variant known as denoising score matching \citep{vincent_connection_2011} is used to learn the noise-conditional score function of the generative process. This has proved successful even in high-dimensional and conditional settings \citep{dhariwal_diffusion_2021, rombach_highresolution_2022}. Denoising score matching considers a noisy observation $\tilde{z}$ of $z$, according to the Gaussian convolution $p(\tilde{z}) = \int p(z)p(\tilde{z}\mid z)\rmd z$ where $p(\tilde{z}\mid z)$ is a noising kernel. The score of the noised distribution is shown to satisfy $\nabla_z \log p(\tilde{z}) = \int \nabla \log p(\tilde{z}\mid z) p(z\mid \tilde{z}) \rmd z$ which is the unique minimiser of the regression objective $\mathcal{J}_{\text{DSM}}(\psi) = \mathbb{E}_{z, \tilde{z}}\big\{\|s_\psi(\tilde{z}) - \nabla \log p(\tilde{z}\mid z)\|^2\big\}$. 

\section{Method}
\label{sec:method}

We contribute three significant observations and results in the following section, which culminate in our score-based method for EIG optimisation of policies. Firstly, we observe that the information gain is fundamentally independent of how the observed data was collected, so approximations to deal with the double intractability do not necessarily have to depend on the policy. Secondly, we derive a precise EIG gradient expression that shows the exact source of the double intractability which we aim to resolve in a way that is detached from the policy. Thirdly, we show how these intractabilities can be naturally approximated with a score network trained on a specific score matching loss. Combining these facts leads to our \scorebed approach, where we first learn an accurate and computationally favourable approximation of the EIG gradient upfront, then use this learned approximation for training policies. This transforms the computationally challenging, doubly intractable EIG objective into two sequential singly intractable problems.

The importance of this separation is that it can offer significant computational benefits by replacing a multiplicative cost (number of training steps times number of inner samples) with an additive cost (cost of training the score network plus cost of the resulting \emph{singly intractable} policy training). By substantially reducing the burden on the policy training part of the overall procedure, this gives scope to perform architecture adjustments, hyperparameter tuning, and even multiple training restarts to obtain a good policy network~\citep{rainforth_modern_2024}. Historically this required all the computation of the full procedure to be repeated again; for our score matching approach it only requires the now singly intractable policy training  to be repeated, providing substantial gains. This is particularly important if likelihood evaluations are expensive, and is expected to be increasingly beneficial as the problems we work with require more complex policies and training mechanisms~\citep{blau_optimizing_2022}.

\textbf{Separating intractabilities from policies.}~ 
We begin with an important observation that, to the best of our knowledge, has not previously been noted in the context of policy-based BED: the information gain $\IG_\theta(y_{1:T}, \xi_{1:T}) = \mathbb{H}[p(\theta)] - \mathbb{H}[p(\theta\mid y_{1:T}, \xi_{1:T})]$ itself has no direct dependence on the design policy but is simply a function of the data $(\y, \x)$, i.e.~for given data it is independent of how that data was collected (see Appendix \ref{app:teig} for proof). The policy dependence of the EIG only enters through the outer expectation over data, thus the double intractability of the information gain is completely separable from the policy. The upshot of this is that any approximation of the intractabilities in the information gain has a global optimum that is independent of the policy (assuming our approximator has infinite capacity). We can thus learn approximation schemes upfront that are policy-independent, then apply them as fixed approximations for training any policy.

\textbf{EIG gradient expression.}~
The critical quantity required for gradient-based training of policies is the gradient of the EIG objective with respect to policy parameters, $\nabla_\phi \mathcal{I}_T(\phi)$. Existing approaches typically only consider estimating this quantity indirectly, by differentiating through an estimate or bound of the EIG itself. However, variational bounds may be uninformative if the variational approximation is not accurate and differentiating such an approximation does not guarantee an accurate approximation of its gradient. We therefore sidestep these potential challenges by choosing to approximate the gradient of the EIG directly. 

To that end, we present an exact EIG gradient expression which clearly highlights the exact nested intractabilities that are present, as well as how the policy parameters $\phi$ enter the expression \emph{independently} of those intractabilities. 

\begin{theorem}[Reparameterised EIG Gradient Expression]
\label{prop:eig_grad}
    The following expression for the gradient of the EIG holds:
    \begin{equation}
    \label{eqn:adaptive_design_grad_prop}
        \begin{split}
            \nabla_\phi \mathcal{I}_T(\phi) &= \mathbb{E}_{p(\theta)q(\epsilon)}\bigg\{\frac{\mathrm{d}}{\mathrm{d}\phi} \log p({y}_{1:T}\mid {\xi}_{1:T}, \theta) - \\ &\quad \frac{\partial}{\partial {y}_{1:T}} \log p({y}_{1:T}\mid {\xi}_{1:T}) \frac{\partial {y}_{1:T}}{\partial \phi} - \\ & \quad \frac{\partial}{\partial {\xi}_{1:T}} \log p({y}_{1:T}\mid{\xi}_{1:T}) \frac{\partial {\xi}_{1:T}}{\partial \phi}\bigg\}
        \end{split}
    \end{equation}
    where the data and designs are obtained by reparameterisation, $(\y, \x) = g(\epsilon, \theta, \phi)$, with $g$ the reparameterisation map of the data-generating process and $\epsilon \sim q(\epsilon)$ a set of noise variables independent of $\phi$, so that $(\y, \x) \sim p_\phi(\cdot, \cdot\mid\theta)$. The explicit form of $g$ is deferred to Appendix \ref{app:eig_grad_expression} for brevity. In the specific case of static (batch) designs, the expression simplifies to:
    \begin{equation}
        \begin{split}
            \nabla_{\x} \mathcal{I}_T(\x) &= \mathbb{E}_{p(\theta)q(\epsilon)}\bigg\{\frac{\mathrm{d}}{\mathrm{d} \x} \log p({y}_{1:T}\mid \theta, \x) - \\ & \quad \frac{\partial}{\partial {y}_{1:T}} \log p({y}_{1:T}\mid \x)  \frac{\partial {y}_{1:T}}{\partial \x}\bigg\}
        \end{split}
    \end{equation}
    where again $\y = g(\epsilon, \theta, \x)$ via reparameterisation.
\end{theorem}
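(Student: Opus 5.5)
Write the EIG in its mutual-information form and reparameterise the outer expectation. With $(\y,\x)=g(\epsilon,\theta,\phi)$ and $\epsilon\sim q(\epsilon)$ independent of $\phi$,
\begin{equation*}
\mathcal{I}_T(\phi)=\mathbb{E}_{p(\theta)q(\epsilon)}\big\{\log p(\y\mid\x,\theta)-\log p(\y\mid\x)\big\}.
\end{equation*}
This uses the observation above that the information gain depends only on the data $(\y,\x)$ and not on the policy. In particular, the policy factors $\pi_\phi(\xi_t\mid h_{t-1})$ cancel between the joint likelihood and the marginal. The integrand therefore depends on $\phi$ only through the reparameterised data, and the sampling distribution $p(\theta)q(\epsilon)$ is free of $\phi$. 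Under standard regularity conditions (differentiability of $g$, of the likelihood and of the marginal, plus a dominating function), we can interchange $\nabla_\phi$ with the expectation. The first term then gives the total derivative $\frac{\mathrm{d}}{\mathrm{d}\phi}\log p(\y\mid\x,\theta)$, which is exactly the first term of the stated expression. It is singly tractable, since the likelihood is assumed evaluable.

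The key step is the marginal term. The function $\log p(\y\mid\x)$ has no explicit $\phi$-dependence: it is the marginal of the model given the full rollout, $\int p(\theta)\prod_t p(y_t\mid\theta,\xi_t,h_{t-1})\,\mathrm{d}\theta$, with the policy factors removed. Its total derivative therefore splits by the chain rule into
\begin{equation*}
\frac{\partial}{\partial\y}\log p(\y\mid\x)\frac{\partial\y}{\partial\phi}+\frac{\partial}{\partial\x}\log p(\y\mid\x)\frac{\partial\x}{\partial\phi}.
\end{equation*}
Subtracting these gives the second and third terms. I will state $g$ explicitly and recursively: $\xi_t=\pi_\phi(h_{t-1},\epsilon^\xi_t)$ and $y_t=f(\epsilon^y_t,\theta,\xi_t,h_{t-1})$. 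This makes the Jacobians $\partial\y/\partial\phi$ and $\partial\x/\partial\phi$ well defined as derivatives through the entire rollout, so that they account for how earlier $y$'s feed into later designs.

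I expect the main obstacle to be justifying that $\log p(\y\mid\x)$ is a genuine function of $(\y,\x)$ alone, with no hidden $\phi$-dependence through the history terms. I will verify this directly from the factorisation of $p_\phi(\y,\x\mid\theta)$: the ratio $p_\phi(\y,\x\mid\theta)/p_\phi(\y,\x)$ equals $p(\y\mid\x,\theta)/p(\y\mid\x)$ because $\prod_t\pi_\phi$ does not involve $\theta$ and cancels. This also handles the degenerate deterministic-policy case in the sense of the paper's footnote.

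The static case follows as a corollary. Take $\phi=\x$, so that $\partial\x/\partial\phi$ is the identity and $y$ has no effect on the designs. Then the third term, $\frac{\partial}{\partial\x}\log p(\y\mid\x)$, has zero expectation. To see this, write $\mathbb{E}_{p(\y\mid\x)}[\nabla_\x\log p(\y\mid\x)]$, where the derivative is taken with $\y$ held fixed, as $\int\nabla_\x p(\y\mid\x)\,\mathrm{d}\y=\nabla_\x 1=0$. This is the usual vanishing of the expected Fisher score. Dropping that term yields the second displayed formula.
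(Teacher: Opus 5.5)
Your proposal is correct and follows essentially the same route as the paper. Like the paper, you write the EIG in mutual-information form with the policy factors cancelled, reparameterise the outer expectation, and differentiate under the integral, splitting the total derivative of the policy-independent marginal $\log p(y_{1:T}\mid\xi_{1:T})$ by the chain rule. In the static case you remove the Fisher-score term via $\int \nabla_{\xi_{1:T}} p(y_{1:T}\mid\xi_{1:T})\,\mathrm{d}y_{1:T} = 0$, using that the reparameterised $y_{1:T}$ is marginally $p(y_{1:T}\mid\xi_{1:T})$. The differences are cosmetic: you obtain the static case as a corollary with $\phi=\xi_{1:T}$ rather than re-deriving it, and you state explicitly the regularity conditions for interchanging derivative and expectation, which the paper leaves implicit.
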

\begin{proof}
See Appendix \ref{app:eig_grad_expression}.
\end{proof}

The expression in \Cref{eqn:adaptive_design_grad_prop} now reveals the inner structure of the nested intractabilities and dependence on policy parameters $\phi$ in the EIG gradient. In particular, the intractable terms are the Stein score function $\frac{\partial}{\partial {y}_{1:T}} \log p({y}_{1:T}\mid {\xi}_{1:T})$ and the Fisher score function $\frac{\partial}{\partial {\xi}_{1:T}} \log p({y}_{1:T}\mid{\xi}_{1:T})$. Importantly, these terms are \emph{evaluated} at data and designs $({y}_{1:T}, {\xi}_{1:T})$ which are functions of the policy parameter $\phi$ due to reparameterisation, but the terms themselves are entirely conditionally independent of $\phi$ \emph{given} the designs. 

A key intuition here that has not been highlighted by previous work is that the marginal likelihood term $p(y_{1:T}\mid \xi_{1:T})$ that appears in the EIG and its gradient is not the same as the marginal distribution on data for a given policy $p(y_{1:T};\pi_\phi)$. Thus, even though the data distribution obviously depends on our policy, the actual intractable term we are dealing with is given by $\mathbb{E}_{p(\theta)}[p(y_{1:T}\mid \xi_{1:T},\theta)]$, which does not depend on the policy given the designs.

Additionally,~\Cref{prop:eig_grad} shows analytically that the Fisher score term disappears in the case of static designs. This result reduces bias and variance of our gradient approximation in this case, and could also be used to improve EIG gradient estimation when not using our score-based approach.

\begin{algorithm}[t]
\caption{\scorebed}
\label{alg:scorebed}
\algrenewcommand\algorithmicindent{0.6em}
\begin{algorithmic}[1]
\Statex \textbf{Input:} prior $p(\theta)$, likelihood $p(\y\mid\x,\theta)$, reparameterisation map $(\y, \x) = g(\epsilon, \theta, \phi)$, design sampler $q(\x)$, policies $\{\pi_{\phi_1},\dots,\pi_{\phi_P}\}$
\Statex \textbf{Output:} trained policies $\{\pi_{\phi_1},\dots,\pi_{\phi_P}\}$
\Statex
\Statex \textit{\textbf{Stage 1:} Learn intractable score (policy-independent).}
\For{score budget}
    \State Sample batch $\{\x^b \sim q(\x), \; \theta^b \sim p(\theta), \; \y^b \sim p(\y\mid\x^b,\theta^b)\}_{b=1}^{B}$
    \State $\mathcal{J}_{\text{MSM}}(\psi) \gets \dfrac{1}{B}\sum_{b=1}^{B} \| \score(\y^b,\x^b) - \nabla_{y,\xi}\log p(\y^b\mid\x^b,\theta^b)\|^2$
    \State $\psi \gets \textproc{Update}(\psi,\, \nabla_\psi \mathcal{J}_{\text{MSM}}(\psi))$
\EndFor
\Statex
\Statex \textit{\textbf{Stage 2:} Train each policy with the fixed score $\score$.}
\For{$p = 1,\dots,P$}
    \For{policy budget}
        \State Sample batch $\{\theta^n \sim p(\theta), \; \epsilon^n \sim q(\epsilon)\}_{n=1}^{N}$
        \State $(\y^n,\x^n) \gets g(\epsilon^n,\theta^n,\phi_p)$
        \State $\widehat{\nabla_\phi \mathcal{I}_T}(\phi_p) \gets \sum_{n=1}^{N}\Big[\mathrm{d}_{\phi_p}\log p(\y^n\mid\x^n,\theta^n)$
        \Statex \hspace{3em} $-\, \score(\y^n,\x^n)\,\partial_{\phi_p}(\y^n,\x^n)\Big]/N$  \hspace{0.2em} (eq. \ref{eqn:adaptive_design_grad_prop})
        \State $\phi_p \gets \textproc{Update}(\phi_p,\, \widehat{\nabla_\phi \mathcal{I}_T}(\phi_p))$
    \EndFor
\EndFor
\State \Return $\{\pi_{\phi_1},\dots,\pi_{\phi_P}\}$
\end{algorithmic}
\end{algorithm}

\textbf{BED via score-matching.}~ \Cref{prop:eig_grad} makes clear that we need to estimate or approximate $\frac{\partial}{\partial \y} \log p(\y\mid \x)$ and $\frac{\partial}{\partial \x} \log p(\y\mid \x)$ (whether this is done implicitly or directly) as part of an overall scheme for estimating the EIG gradient. Previous work has either done this through estimates of the marginal likelihood that are recalculated for each new $(y_{1:T},\xi_{1:T})$, or using a variational approximation of either the marginal likelihood or posterior, with gradients then taken of the approximation itself.

We propose to instead learn these terms \emph{directly} using score matching. We argue that score matching is a natural choice in this scenario for a number of reasons. Firstly, it explicitly targets accuracy on the intractable score, which as we have shown is the key quantity in the EIG gradient. While one could instead take any density estimator or variational approximation of the intractable marginal and differentiate its log density to approximate the score, such an approach may not be accurate in practice. Indeed, the score function can be expressed as $\nabla_z \log p(z) = \nabla_z p(z)/p(z)$ and density estimation or variational approximations do not control gradient values. In other words, $p \to \nabla \log p$ is not continuous in the metrics used for density estimation or variational approximation. Secondly, the score function is independent of the normalisation constant and therefore the expressivity of a score network is not limited by the restrictions imposed by normalisation which are found in variational approaches. Finally, score matching is simply a supervised regression problem which scales well with dimension and is not complicated by gradient variance in the same way as variational methods \citep{kingma_autoencoding_2013}.

To guarantee the accuracy of our EIG gradient approximation in practice, we show in Appendix \ref{app:error_bounds} that if the reparameterisation map $\phi \mapsto (\y, \x)$ is $L$-Lipschitz, the $\ell_2$ error of our gradient estimator is bounded above by $L$ times the mean score error. Achieving accurate EIG gradient estimates is therefore simply a question of score network capacity, training budget, and avoiding local optima, noting we have access to unlimited model samples during training. 

\textbf{Marginal score matching.}~
We propose a score matching objective which directly recovers the intractable score functions from the likelihood of the model. Removing dependence on $t$ for ease of notation, we are required to learn $\nabla_{y, \xi} \log p(y\mid \xi)$ where $p(y\mid \xi) = \int p(y\mid \xi, \theta) p(\theta) \mathrm{d}\theta$. A simple score identity is as follows:
\begin{align}
    \nabla_{y, \xi}&\log p(y\mid \xi) = \nabla_{y, \xi} \log \left(\int p(y\mid \xi, \theta) p(\theta) \mathrm{d}\theta\right) \notag \\   
    &= \int \nabla_{y, \xi}\left[ \log p(y\mid \xi, \theta)\right] p(\theta\mid \xi, y) \mathrm{d}\theta \label{eqn:msm_identity}
\end{align}
This identity has been considered by \citet{ao_estimating_2024, brehmer_mining_2020} as the basis for Monte Carlo score approximations, but has not been used for training score networks. This inspires the following regression objective, which we refer to as marginal score matching (MSM):
\begin{equation}
    \label{eqn:msm_loss}
    \mathcal{J}_{\text{MSM}}(\psi) = \mathbb{E}_{\xi, y, \theta}
    \{\| s_\psi(y, \xi) - \nabla_{y, \xi}\log p(y\mid \xi, \theta)\|^2\},
\end{equation}
where the expectation is taken over the joint distribution of the designs $\xi$, observations $y$ and parameters $\theta$, and $s_\psi(y, \xi)$ is a deep neural network with parameters $\psi$ which we will use as an approximation to the intractable score functions. It is easy to show that the parameters $\psi^*$ minimising \cref{eqn:msm_loss} are such that $s_{\psi^*}(y, \xi) = \nabla_{y, \xi}\log p(y\mid \xi) \; \forall (y, \xi); q(y, \xi) > 0$ where $q$ is the distribution used to sample $(y, \xi)$. Critically, the objective requires no direct sampling from the posterior, only the joint on $\xi,y,\theta$. Consequently, our approach is suitable in models with high-dimensional parameter spaces where nested sampling approaches suffer exponential sample complexity with parameter space dimension. Our regression targets live in the design and output spaces which are unaffected in complexity by parameter dimension. 

Moreover, assuming a sufficiently powerful score network architecture, we are free to take the expectation over $\xi$ with respect to \emph{any} distribution that has support over all possible designs, because we are simply regressing from $(y, \xi)$ pairs to scores and $s_{\psi^*}(y,\xi)$ for any given $\xi$ is independent of our distribution on $\xi$. Thus, we do not need to sample according to the policy and can instead sample from the joint $\tilde{q}(\xi, y, \theta) = q(\xi) p(\theta) p(y\mid \xi, \theta)$, where $q(\xi)$ is simply chosen to reflect where we most want the score network to be accurate. In our experiments we found that sampling $\x$ from a simple design distribution (e.g. uniform over the design space) was usually sufficient and maintained independence of the approximation from the policy. We experiment with different choices of $q(\xi)$ in Appendix \ref{app:design_sampler_abl}.

We also point out that we could alternatively train the score network by minimising an implicit score matching loss such as the sliced score matching (SSM) loss \citep{song_sliced_2019}, which does not rely on the model likelihood. We found the MSM loss performed better, however the SSM loss is useful for settings with implicit or expensive likelihoods.

\textbf{Score network architecture.}~ We provide full details of our score network architecture in Appendix \ref{app:method_score}, however we remark on two important points in the following. Firstly, in learning $\nabla_{y, \xi} \log p(y \mid \xi)$ we simultaneously recover each of the required score terms $\frac{\partial}{\partial y} \log p(y\mid \xi)$ and $\frac{\partial}{\partial \xi} \log p(y\mid \xi)$ from the same score network. We found that a shared backbone with separate prediction heads for each score component was the most efficient architecture. Secondly, when observations $y_t$ are conditionally i.i.d. given $(\theta, \xi)$, the marginal likelihood $p(\y\mid \x)$ is exchangeable with respect to pairs $(y_t, \xi_t)$. We encode this permutation equivariance as an inductive bias by using a TNP-style architecture with self-attention \citep{vaswani_attention_2017,pmlr-v162-nguyen22b} and add positional encodings to $(y_t, \xi_t)$ in non-exchangeable cases.

\textbf{Applicability.}~  \scorebed relies on explicit likelihoods with differentiable reparameterisations, however these are typical assumptions in alternative approaches \citep{goda_unbiased_2022,foster_deep_2021,ao_estimating_2024,iollo_bayesian_2025}. Our method also requires continuous design and observation spaces in order for the Stein and Fisher score functions to exist. We highlight that a large number of BED problems satisfy these assumptions, meaning that our method is still a practical BED approach for a wide class of problems. 

\vspace{-4pt}
\section{Related Work}
\vspace{-3pt}

\looseness=-1
The main body of BED literature has revolved around estimating or bounding the EIG itself, for instance contrastive estimation~\citep{foster_deep_2021} and variational approaches (see e.g.~\citet{rainforth_modern_2024} and the references therein). Some recent works have further explored direct EIG gradient estimation. \citet{goda_unbiased_2022} introduce an unbiased estimate of the EIG gradient based on multi-level Monte Carlo~\citep{rhee_unbiased_2015}, which acts as a de-biasing scheme in finite samples, at the cost of increased variance. \citet{ao_estimating_2024} also directly target the EIG gradient using expressions similar to~\cref{eqn:adaptive_design_grad_prop} and \cref{eqn:msm_identity}. Contrary to our amortisation approach, they use MCMC to sample the posterior and estimate~\cref{eqn:msm_identity} via Monte Carlo, leading to a high cost per gradient update when training policies and poor sample complexity in high parameter dimensions. A final direct EIG gradient approach is~\citet{iollo_bayesian_2025}, who use sampling as optimisation to roll the nested sampling problem into a single loop, although they do not train policies.

A closely related family of approaches is the variational methods introduced by \citet{foster_variational_2019,foster_unified_2020} and further adopted by \citet{dong_variational_2025,bracher_jadai_2025,huang_aline_2025}. The original work of \citet{foster_variational_2019} applied a two-stage procedure in the case of static designs, learning the variational approximation up front and using the resulting bound on the EIG to optimise designs. However, subsequent policy-based variational approaches \citep{dong_variational_2025,bracher_jadai_2025,huang_aline_2025} co-train the policy and the variational approximation rather than recognising the variational approximation as a global approximation independent of the policy. Furthermore, the BED literature typically considers a policy-dependent data distribution $p(\y; \pi_\phi)$ and thus considers the posterior to be dependent on the policy $\pi_\phi$. We instead emphasise---as shown in Appendix \ref{app:teig}---that the posterior never depends on the policy, nor more generally on any sampling or observation mechanism~\citep{rubin1976inference} under appropriate assumptions. As such, variational methods can be applied in the same two-stage procedure which we adopt, and we demonstrate this in our experiments in \Cref{sec:experiments}.

\looseness=-1
Concurrent work by~\citet{huang2026efficient} also reduces the cost of policy training, leveraging belief representations from a foundation model pre-trained independently of any policy, thereby relieving policy training from learning representations. They approximate the EIG using PCE, so their cost saving is complementary to ours. Finally,~\citet{kleinegesse_bayesian_2020, ivanova_implicit_2021} take an amortised approach for implicit models using critics to build functional approximations which bound the mutual information, while \citet{hedman_stepdad_2025} consider periodic retraining of the policy during the experiment. 
We also note connections to gradient estimators for implicit models, particularly those leveraging score matching as proposed by \citet{song_sliced_2019}. 
One such work is \citet{lim_ardae_2020}, who approximate the gradient of the entropy of an implicit distribution using denoising autoencoders \citep{vincent_connection_2011}. Alternative entropy gradient estimation techniques can also be derived from Stein's identity~\citep{li_gradient_2018, wen_gradient_2021, shi_spectral_2018}.

\vspace{-5pt}
\section{Experiments}
\label{sec:experiments}
\vspace{-4pt}

We empirically validate \scorebed on a wide selection of common experimental design tasks against several competitive baseline approaches. In each task we train adaptive design policies using \scorebed and baselines and report upper and lower bounds on the attained EIG of the final policy. Our experiments span tasks with varying properties, including non-Markovian likelihoods and high-dimensional parameter spaces. We present four BED tasks in the following sections and include an additional task in Appendix \ref{app:extra_gravimetry}. Full experimental details are included in Appendix \ref{app:exp_details}.

\begin{table*}[!t]
    \centering
    \caption{Results for the source location finding task. Every method used a total NLE budget of $6.144 \times 10^9$, except for PCE$^{\ast}$ which used $10\times$ the budget. Results show mean ± one standard error over 4 policy repeats, bold indicates statistically indistinguishable from the best lower bound at 95\% confidence.}
    \label{tab:lf_results}
    \label{tab:lf_k10d3_results}
    \begin{tabular}{lrrrr}
        \toprule
        & \multicolumn{2}{c}{$d=2, K=2$} & \multicolumn{2}{c}{$d=3, K=10$} \\
        \cmidrule(lr){2-3} \cmidrule(lr){4-5}
        Method & Lower Bound & Upper Bound & Lower Bound & Upper Bound \\
        \midrule
        \scorebed (P=1) & 10.81 $\pm$ 0.06 & 10.95 $\pm$ 0.07 & \textbf{9.45 $\pm$ 0.03} & 9.74 $\pm$ 0.05 \\
        \scorebed (P=5) & 10.98 $\pm$ 0.04 & 11.12 $\pm$ 0.04 & \textbf{9.37 $\pm$ 0.04} & 9.65 $\pm$ 0.07 \\
        PCE (P=1) & 10.12 $\pm$ 0.11 & 10.16 $\pm$ 0.11 & 8.82 $\pm$ 0.02 & 8.97 $\pm$ 0.03 \\
        PCE (P=5) & 9.61 $\pm$ 0.06 & 9.64 $\pm$ 0.06 & 8.49 $\pm$ 0.04 & 8.61 $\pm$ 0.04 \\
        PCE$^{\ast}$ (P=1) & 10.83 $\pm$ 0.03 & 10.92 $\pm$ 0.04 & 9.14 $\pm$ 0.08 & 9.33 $\pm$ 0.09 \\
        JointVarPost (P=1) & \textbf{11.52 $\pm$ 0.05} & 11.78 $\pm$ 0.06 & 8.18 $\pm$ 0.04 & 8.44 $\pm$ 0.05 \\
        JointVarPost (P=5) & 11.25 $\pm$ 0.06 & 11.48 $\pm$ 0.08 & 7.86 $\pm$ 0.04 & 8.00 $\pm$ 0.04 \\
        VarMarg (P=1) & 4.51 $\pm$ 0.08 & 4.51 $\pm$ 0.08 & 3.31 $\pm$ 0.06 & 3.31 $\pm$ 0.06 \\
        VarMarg (P=5) & 5.02 $\pm$ 0.08 & 5.03 $\pm$ 0.08 & 3.57 $\pm$ 0.02 & 3.57 $\pm$ 0.02 \\
        VarPost (P=1) & 11.11 $\pm$ 0.04 & 11.23 $\pm$ 0.05 & 7.08 $\pm$ 0.21 & 7.14 $\pm$ 0.22 \\
        VarPost (P=5) & 11.18 $\pm$ 0.02 & 11.31 $\pm$ 0.03 & 7.37 $\pm$ 0.10 & 7.46 $\pm$ 0.11 \\
        \bottomrule
    \end{tabular}
\end{table*}

\begin{figure*}[t]
    \centering
        \includegraphics[width=0.38\linewidth]{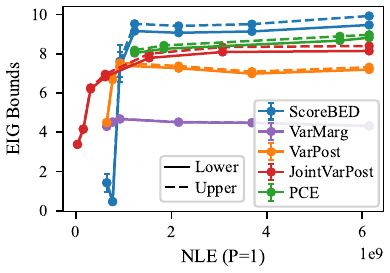}
        \centering ~~~~
        \includegraphics[width=0.38\linewidth]{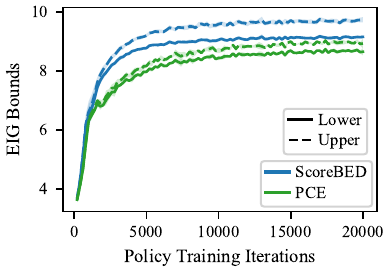}
    \vspace{-3pt}
    \caption{Left: EIG vs NLE ($P=1$) curves for each method on the $d=3,K=10$ location finding setting. Right: Comparing policy training curves for \scorebed and PCE on the same problem.}
    \vspace{-8pt}
    \label{fig:lf_k10d3_curves}
\end{figure*}

\textbf{Procedure.}~ In order to ensure a fair comparison, each task has the same fixed computational budget across methods, specified through the allowed \emph{number of likelihood evaluations} (NLE). For each approach, we obtain a final policy network through two distinct procedures. First, we train a singular policy network and report its achieved EIG bounds with standard errors reporting variation over re-runs of a single policy training. Second, we instead consider training $P$ restarts of the policy network and report the EIG bounds of the best-performing policy of the $P$ candidates (estimated on a held-out test set), reporting standard errors across re-runs of the best-of-$P$ procedure. The configuration of each baseline is adjusted in order to train $P$ policies with the same total NLE budget. Our two-stage approach means that training multiple policies can be done cheaply, therefore the score training budget is only modestly reduced to accommodate increased $P$. On the other hand, approaches which do not amortise the intractabilities of the EIG must reduce their budget per policy proportionally to $P$. In practice, training $P$ policies allows for architecture searches and hyperparameter tuning to increase performance, but we considered a fixed policy configuration to isolate the quality of each method on a consistent policy training task.

\looseness=-1
\textbf{Baselines.}~ We compare \scorebed to several established BED baselines which cover both contrastive sampling and variational approaches. First, we compare against PCE \citep{foster_deep_2021}. In some cases we also include PCE$^\ast$, which uses a $10\times$ larger-budget variant of PCE since related works sometimes consider larger $M$ than our original budgets. Second, we compare against two pre-trained policy-based extensions of the variational approaches of \citet{foster_variational_2019}, the variational marginal (VarMarg) and variational posterior (VarPost), where we pre-train variational approximations of the intractable marginal and posterior distributions, and then train the policy using these fixed approximations (resulting in upper and lower bounds on the EIG respectively). We emphasise that these approaches do not currently appear in the literature as variational approaches are typically co-trained alongside the policy network. Finally, we compare to the co-trained variational posterior approach of \citet{foster_unified_2020} (JointVarPost), in which the policy is co-trained with the variational approximation by maximising the Barber-Agakov lower bound on the EIG. On the dynamical systems tasks, we also include \iosmc \citep{iqbal_nesting_2024}. VarMarg and VarPost represent two-stage approaches which share computation across multiple policies similarly to \scorebed, while PCE, JointVarPost and \iosmc must repeat all computations for each policy. Details of each baseline are included in Appendix \ref{app:method_baselines}. We also tested MLMC \citep{goda_unbiased_2022} but found that the wall-clock cost was prohibitive and the results were not competitive, see Appendix \ref{app:mlmc_results}. 

The exact NLE cost of each method is calculated in Appendix \ref{app:nle_calcs} and the configurations of each baseline to respect the exact budget are detailed in Appendix \ref{app:nle}. In order to accommodate different numbers of policies $P$ while respecting the same total budget, we varied the upfront training budget for \scorebed, VarMarg and VarPost, the total training budget for JointVarPost and the number of inner samples and particles for PCE and \iosmc respectively. 

\vspace{-3pt}
\subsection{Source Location Finding}
\vspace{-2pt}

\begin{table*}[!t]
    \centering
    \caption{Results for dynamical systems. All tasks have a common NLE budget of $1.024 \times 10^{10}$. \scorebed selects policies across three score network seeds each receiving one third of the budget. The NLE cost of \iosmc is random due to extra steps following degeneracy criteria; exact values are presented in Appendix \Cref{tab:nle_iosmc}. Results show mean ± one standard error over 4 policy repeats, bold indicates statistically indistinguishable from the best lower bound at 95\% confidence.}
    \label{tab:dyn}
    \label{tab:sp_results}
    \label{tab:cp_results}
    \label{tab:dp_results}
    \begin{tabular}{lrrrrrr}
        \toprule
        & \multicolumn{2}{c}{Stochastic pendulum} & \multicolumn{2}{c}{Cart-pole} & \multicolumn{2}{c}{Double link} \\
        \cmidrule(lr){2-3} \cmidrule(lr){4-5} \cmidrule(lr){6-7}
        Method & Lower Bound & Upper Bound & Lower Bound & Upper Bound & Lower Bound & Upper Bound \\
        \midrule
        \scorebed (P=3) & \textbf{3.91 $\pm$ 0.02} & 3.91 $\pm$ 0.02 & 6.94 $\pm$ 0.01 & 6.96 $\pm$ 0.00 & 11.01 $\pm$ 0.07 & 11.59 $\pm$ 0.12 \\
        \scorebed (P=50) & \textbf{3.95 $\pm$ 0.02} & 3.95 $\pm$ 0.02 & \textbf{7.17 $\pm$ 0.02} & 7.19 $\pm$ 0.02 & 11.78 $\pm$ 0.01 & 13.04 $\pm$ 0.05 \\
        PCE (P=1) & \textbf{3.91 $\pm$ 0.01} & 3.91 $\pm$ 0.01 & \textbf{7.22 $\pm$ 0.01} & 7.23 $\pm$ 0.01 & 11.57 $\pm$ 0.06 & 12.49 $\pm$ 0.15 \\
        PCE (P=50) & 3.90 $\pm$ 0.01 & 3.90 $\pm$ 0.01 & 7.13 $\pm$ 0.03 & 7.15 $\pm$ 0.03 & 11.15 $\pm$ 0.06 & 11.86 $\pm$ 0.10 \\
        JointVarPost (P=1) & \textbf{3.70 $\pm$ 0.16} & 3.70 $\pm$ 0.16 & 6.10 $\pm$ 0.25 & 6.10 $\pm$ 0.25 & 11.76 $\pm$ 0.08 & 12.99 $\pm$ 0.14 \\
        JointVarPost (P=50) & \textbf{3.95 $\pm$ 0.01} & 3.95 $\pm$ 0.01 & \textbf{7.20 $\pm$ 0.01} & 7.22 $\pm$ 0.01 & \textbf{11.99 $\pm$ 0.03} & 13.37 $\pm$ 0.15 \\
        VarMarg (P=1) & 3.63 $\pm$ 0.02 & 3.64 $\pm$ 0.02 & 5.50 $\pm$ 0.02 & 5.50 $\pm$ 0.02 & 11.44 $\pm$ 0.04 & 12.33 $\pm$ 0.10 \\
        VarMarg (P=50) & 3.63 $\pm$ 0.01 & 3.63 $\pm$ 0.01 & 5.54 $\pm$ 0.00 & 5.55 $\pm$ 0.00 & 11.44 $\pm$ 0.01 & 12.35 $\pm$ 0.04 \\
        VarPost (P=1) & 3.87 $\pm$ 0.01 & 3.87 $\pm$ 0.01 & \textbf{7.21 $\pm$ 0.01} & 7.22 $\pm$ 0.01 & 11.79 $\pm$ 0.02 & 12.92 $\pm$ 0.06 \\
        VarPost (P=50) & \textbf{3.92 $\pm$ 0.02} & 3.92 $\pm$ 0.02 & \textbf{7.19 $\pm$ 0.02} & 7.20 $\pm$ 0.02 & 11.84 $\pm$ 0.02 & 13.32 $\pm$ 0.21 \\
        \iosmc (P=1) & 3.49 $\pm$ 0.04 & 3.49 $\pm$ 0.04 & 5.81 $\pm$ 0.02 & 5.81 $\pm$ 0.02 & 11.31 $\pm$ 0.04 & 12.09 $\pm$ 0.04 \\
        \bottomrule
    \end{tabular}
\end{table*}

The first experiment is a long-standing BED benchmark problem \citep{sheng_maximum_2005, foster_deep_2021} where one places designs in a $d$-dimensional space and makes a noisy observation of the superimposed signals propagating from $K$ unknown sources. We consider the typical $d=2, K=2$ setting and a more challenging $d=3, K=10$ setting. In each case we perform $T=30$ experiments and train a permutation-invariant DAD network~\citep{foster_deep_2021}. We highlight that in each setting, a single pre-trained score network was used to train \emph{all} the score-based policies reported. Similarly, VarMarg and VarPost use a single pre-trained variational approximation while JointVarPost trains entirely from scratch for each policy. 

Results are reported in \Cref{tab:lf_results}. For the $d=2, K=2$ setting we observe that each of the two-stage approaches exhibits marginally improved performance at the $P=5$ protocol, while PCE and the co-trained variational posterior see better performance under $P=1$. This highlights that pre-training of the EIG intractabilities enables cheap training of multiple policies \emph{without} sacrificing the performance of each individual policy. We emphasise that in practical applications, we expect more significant gains by using these $P$ policies to search over architectures and hyperparameters. Here the overall best-performing approach was JointVarPost, closely followed by VarPost and \scorebed.
 
For the $d=3, K=10$ setting, \Cref{tab:lf_k10d3_results} clearly shows that \scorebed outperforms all competing methods at both the $P=1$ and $P=5$ protocols. The parameter space in this task is $30$-dimensional, which we believe particularly suits \scorebed (and challenges nested or variational posterior approaches) due to the behaviour of the score-matching objective in high parameter dimensions, as discussed in \Cref{sec:method}. Interestingly, \scorebed performs marginally better when $P=1$, which indicates that the score network has not yet plateaued at this budget, as corroborated by the EIG vs NLE curves (see below). Surprisingly, we observe that the variational marginal approach does not perform well on either of the location finding settings, despite being theoretically more suited to the high-dimensional setting for the same reasons as \scorebed. 

In \Cref{fig:lf_k10d3_curves} we present a curve of EIG bounds versus NLE budget for each method under the $P=1$ protocol for the higher dimensional setting, which shows \scorebed achieving consistently higher EIG over a range of NLE budgets. We also visualise a comparison between the EIG curves during policy training for \scorebed and PCE, where \scorebed appears to benefit from slightly faster training \emph{per iteration} than PCE, likely due to reducing the variance of the gradient estimates. This translates to significantly faster policy training per NLE since the per-iteration cost of \scorebed is significantly cheaper in NLE given the pre-trained score network allowing us to avoid the need to consider $M$ contrastive samples.

\subsection{Dynamical Systems}

Our second set of experiments is a series of dynamical systems models, introduced as BED benchmarks by \citet{iqbal_nesting_2024}. Each problem consists of a dynamical system whose behaviour is governed by an SDE depending on a set of unknown parameters and controlled by an external force $\xi_t$. The design problem is to control the system via $\xi_t$ in order to learn about the unknown parameters. The models are the stochastic pendulum, cart-pole and double link systems, which increase in complexity based on dimensionality and trajectory dynamics. In all cases, each design decision directly impacts the state of the system at future iterations, creating a non-Markovian structure where early design decisions can dramatically change the regime of later observations. We simulate each system for $T=50$ time steps using an Euler-Maruyama discretisation scheme with step size $\delta=0.05$. We train an adaptive policy using the stochastic LSTM architecture from \citet{iqbal_nesting_2024}.

We note for these problems we experienced some instability in the training of the score network. To mitigate this, we trained three score networks each with a third of the allowed budget. We then split the $P$ policy restarts between the three networks and reported the performance of the best policy as per our established best-of-$P$ protocol. We therefore report $P=3$ and $P=50$ variants only for our approach. We also only present \iosmc in the $P=1$ variant because the wall-clock time for the $P=50$ protocol was prohibitive.

\textbf{Stochastic pendulum.} This system consists of a pendulum of unknown mass and length controlled by a torque $\xi_t \in [-1, 1]$. Results in \Cref{tab:sp_results} show statistically indistinguishable performance between \scorebed and the two variational posterior approaches under the $P=50$ protocol, while PCE performs similarly as well. We found policy training to be very cheap here---only requiring 500 gradient steps to converge---therefore we were able to cheaply train $50$ policy restarts but found only modest improvements in doing so. We expect larger gains could be realised by allowing search over architectures amongst the $P=50$ policies. Both VarMarg and \iosmc performed poorly compared to the other approaches. The lower performance of \iosmc than presented in \citet{iqbal_nesting_2024} throughout the experiments is due to our problem setup being slightly different, with a higher observation noise level to avoid EIG saturation and thus more reliable evaluation. We were still able to replicate their results when run on the same noise setting.

\textbf{Cart-pole.} The cart-pole system consists of a cart of unknown mass and a freely pivoting pole of unknown mass and length. The design $\xi_t \in [-5, 5]$ is the force applied to the cart, and observations consist of the velocity and acceleration of the cart and the angular position and angular velocity of the pole. The results presented in \Cref{tab:cp_results} draw similar conclusions to the stochastic pendulum variant with very minor differences between approaches, except for VarMarg and \iosmc, which performed poorly. We note the joint variational posterior approach also experienced some instability on this task and thus benefitted from selecting the best policy from the $P=50$ protocol.

\textbf{Double link.} The double link system is made up of two links, each of unknown mass and length. The system is controlled by a two-dimensional design with $\xi_t^{(1)} \in [-4, 4]$ being the torque on the first link and $\xi_t^{(2)} \in [-2, 2]$ the torque on the second. The results in \Cref{tab:dp_results} show that the co-trained variational posterior works best on this task, with the variational posterior and \scorebed performing similarly and nearly catching up with the co-trained variant when $P=50$. The best PCE variant here occurs with $P=1$ and lags slightly behind the aforementioned approaches, but is slightly better than VarMarg or \iosmc.

\begin{figure}[t]
    \centering
    \includegraphics[width=0.9\linewidth]{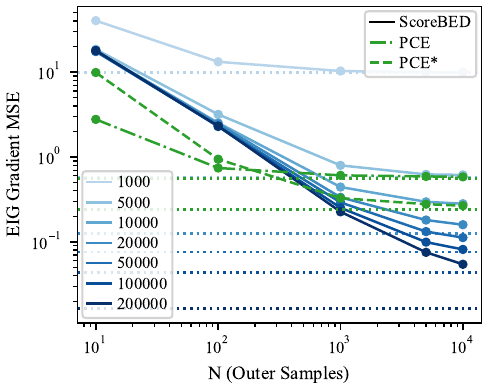}
    \caption{EIG gradient MSE versus N for PCE and \scorebed. Horizontal lines mark squared-bias asymptotes. PCE$^{\ast}$ uses $10\times$ the budget of PCE by increasing the number of inner samples by $10$, as per \Cref{tab:lf_results}.}
    \vspace{-5pt}
    \label{fig:eig_mse_vs_N_lfk10d3}
    \vspace{-5pt}
\end{figure}

\vspace{-3pt}
\subsection{Policy Gradient Bias}
\label{sec:exp_bias_var}
\vspace{-2pt}

\looseness=-1
To investigate the key factors behind the quality of policy training, we explored the error of the \scorebed EIG gradient estimates as a function of $N$ and the number of score training iterations on the high-dimensional source location finding task in \Cref{fig:eig_mse_vs_N_lfk10d3}. At low $N$, we observe the expected $1/N$ convergence rate, which gives way to a bias plateau as $N$ increases.  We derive a bias-variance decomposition in Appendix \ref{app:bias_variance_decomp} which explains this behaviour as variance terms in the decomposition are scaled by $1/N$. Increasing $N$ in the gradient estimator eventually leads to the bias dominating the error, and even at modest $N$---for example we used $N=1024$ in our experiments---the implicit averaging effect of stochastic gradient descent still reveals the bias as the limiting source of error over policy training. Indeed, we overlaid the error curve for PCE and found significantly higher gradient biases, which matches the inferior performance of PCE on this task (whereas we found this bias to be lower for PCE on tasks where it performed better, cf.~\Cref{fig:eig_vs_bias_scatters_lf_sp}). 

\begin{figure}[t]
    \centering
    \includegraphics[width=\linewidth]{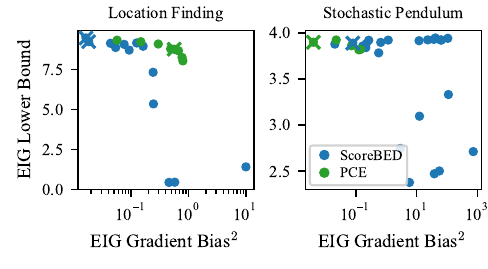}
    \vspace{-2em}
    \caption{EIG lower bound versus EIG gradient squared-bias. \scorebed points are generated from different seeds and training checkpoints. PCE points cover a range of $M$ inner samples. `x' marks the budgets used in our main results.}
    \label{fig:eig_vs_bias_scatters_lf_sp}
    \vspace{-5pt}
\end{figure}

Furthermore, we find a clear correlation between the gradient bias and the downstream EIG achieved across all our experiments, which we illustrate in \Cref{fig:eig_vs_bias_scatters_lf_sp}. Higher gradient biases can throw off the policy training and lead to poor performance of the final policy. This reveals the gradient bias as a key determining factor to the success of any policy training methodology. For \scorebed, bias corresponds to approximation and estimation errors of the score network, which are typically reducible by better training and network capacity, while PCE and nested sampling approaches admit bias through finite sample sizes. This confirms each of these approaches as distinct methods with different properties which may each be advantageous in their own scenarios. 

\vspace{-4pt}
\section{Conclusion}
\vspace{-3pt}

\looseness=-1
We turned the doubly intractable EIG gradient in policy-based BED problems into two singly intractable problems which can be solved sequentially: a score matching problem for the intractable marginal followed by policy-training using the learned score network. This \scorebed approach allows significantly cheaper gradient-based policy training by solving the double intractability upfront, such that we can easily repeatedly train multiple policies on a given problem. This benefits policy training through, for example, allowing architecture and hyperparameter searches and avoiding local optima. Our results illustrate that \scorebed achieves competitive performance with existing approaches that do not allow cheap policy retraining and with newly proposed alternatives that train a fixed amortised posterior network upfront.

\clearpage

\begin{acknowledgements}
    AP is supported by the EPSRC CDT in Modern Statistics and Statistical Machine Learning (EP/S023151/1). GK and TR are supported by EPSRC grant EP/Y037200/1.
\end{acknowledgements}

\bibliography{references}

\newpage

\onecolumn

\title{Bayesian Experimental Design via Score Matching\\(Supplementary Material)}
\maketitle

\appendix

\allowdisplaybreaks[3]

\section*{Introduction}

The Appendix is structured as follows. In Appendix \ref{app:eig_gradient_expression} we derive the gradient expression of \Cref{prop:eig_grad}. In Appendix \ref{app:error_bounds} we prove an error bound on the \scorebed EIG gradient approximation. In Appendix \ref{app:bias_variance_decomp} we provide the bias--variance decomposition discussed in \Cref{sec:exp_bias_var}. In Appendix \ref{app:exp_details} we provide full details of our experimental procedures. In Appendix \ref{app:ablations} we provide additional results, ablation studies and further empirical investigations. 

\section{EIG gradient expression}
\label{app:eig_gradient_expression}

\subsection{Policy-based Total EIG}
\label{app:teig}

Recall the information gain as a measure of the utility~\citep{lindley_measure_1956} of a set of designs $\x$ for which you observe data $\y$, defined as $\IG_\theta(y_{1:T}, \xi_{1:T}) = \mathbb{H}[p(\theta)] - \mathbb{H}[p(\theta\mid y_{1:T}, \xi_{1:T})]$. Since we wish to optimise the next designs prior to observing any data, we consider the expectation of the information gain with respect to the data generating process.

In a policy-based BED setting, we plan to draw designs from an adaptive proposal distribution $\pi_\phi(\xi_t\mid h_{t-1})$ which we assume to be stochastic in general although we note that optimal policies are always deterministic \citep{lindley1972bayesian}. In the stochastic case, $\pi_\phi(\xi_t \mid h_{t-1})$ represents a distribution with density of the same notation defined with respect to the Lebesgue measure. If the policy network is instead deterministic, $\pi_\phi$ no longer has a density with respect to the Lebesgue measure and instead collapses to a degenerate distribution. In the deterministic case our notation is no longer formally correct but an alternative formulation of the likelihood $p(y_t\mid \theta; \xi_t)$ could be considered where $\xi_t$ is no longer a random variable but a parameter of the likelihood. With that distinction, all of the same definitions and results hold. A more in-depth discussion can be found in the Appendix of \citet{ivanova_implicit_2021}.

Given the (stochastic) policy network $\pi_\phi$ and conditional on the underlying parameter $\theta$, the data generating process can be written as $p_\phi(\y, \x\mid\theta) = \prod_{t=1}^T \pi_\phi(\xi_t \mid h_{t-1}) p(y_t\mid\theta, \xi_t, h_{t-1})$. We integrate this over the prior $p(\theta)$ to obtain the joint $p_\phi(\y, \x)$. The posterior distribution is $p(\theta\mid\y, \x) \propto p(\theta)p_\phi(\y, \x\mid\theta)$ which is easily seen to be independent of how the designs $\x$ were proposed, as follows:
\begin{align}
    p(\theta\mid y_{1:T}, \xi_{1:T}) &= \frac{p(\theta)p_\phi(y_{1:T}, \xi_{1:T}\mid\theta)}{\int p(\theta)p_\phi(y_{1:T}, \xi_{1:T}\mid\theta) \mathrm{d}\theta} \\
    &= \frac{p(\theta)\prod_{t=1}^T \pi_\phi(\xi_t\mid h_{t-1}) p(y_t\mid\xi_t, h_{t-1}, \theta)}{\int p(\theta)\prod_{t=1}^T \pi_\phi(\xi_t\mid h_{t-1}) p(y_t\mid\xi_t, h_{t-1}, \theta) \mathrm{d}{\theta}} \label{eqn:policy_dependent_posterior_representation}\\
    &= \frac{p(\theta)\prod_{t=1}^T p(y_t\mid\xi_t, h_{t-1}, \theta)}{\int p(\theta)\prod_{t=1}^T p(y_t\mid\xi_t, h_{t-1}, \theta) \mathrm{d}{\theta}} \label{eqn:policy_independent_posterior}
\end{align}
where the design proposals cancelled. This assumes that observations do not depend on the policy in any way beyond the realised designs, that is we have $p(y_t|\xi_t, h_{t-1}, \theta)$ rather than $p(y_t|\xi_t, h_{t-1}, \theta, \phi)$. The ignorability of the observation mechanism in posterior inference and the conditions required have previously been studied in related literature on missing data, see \citet{rubin1976inference}.

We also define the marginal distribution of the data given a set of designs, $p(\y\mid\x) = \int p(\theta) \prod_{t=1}^T p(y_t\mid\xi_t, h_{t-1}, \theta) \mathrm{d} \theta = \prod_{t=1}^Tp(y_t\mid\xi_t, h_{t-1})$ where $p(y_t\mid\xi_t, h_{t-1}) = \int p(\theta\mid h_{t-1})p(y_t\mid\xi_t, h_{t-1}, \theta)\mathrm{d}\theta$. This decomposition is axiomatically true by stacking nested conditional densities but we also show below that $p(y_t\mid \xi_t, h_{t-1})$ has a particular form:
\begin{align}
    \int p(\theta) &\prod_{t=1}^T p(y_t\mid \xi_t, h_{t-1}, \theta) \mathrm{d} \theta \\
    &= \int p(\theta) p(y_1\mid \xi_1, h_0, \theta) \prod_{t=2}^T p(y_t\mid \xi_t, h_{t-1}, \theta) \mathrm{d}\theta \\
    &= \underbrace{\int p(\theta) p(y_1\mid \xi_1, h_0, \theta) \mathrm{d}\theta}_{p(y_1\mid \xi_1, h_0)} \int \frac{p(\theta) p(y_1\mid \xi_1, h_0, \theta)}{\int p(\theta) p(y_1\mid \xi_1, h_0, \theta) \mathrm{d}\theta} \prod_{t=2}^T p(y_t\mid \xi_t, h_{t-1}, \theta) \mathrm{d}\theta \\
    &= p(y_1\mid \xi_1, h_0) \int p(\theta\mid h_1)\prod_{t=2}^T p(y_t\mid \xi_t, h_{t-1}, \theta) \mathrm{d}\theta \\
    &= p(y_1\mid \xi_1, h_0) \underbrace{\int p(\theta\mid h_1) p(y_2\mid \xi_2, h_1, \theta) \mathrm{d}\theta}_{p(y_2\mid \xi_2, h_1)} \int \frac{p(\theta\mid h_1) p(y_2\mid \xi_2, h_1, \theta)}{\int p(\theta\mid h_1) p(y_2\mid \xi_2, h_1, \theta) \mathrm{d}\theta} \prod_{t=3}^T p(y_t\mid \xi_t, h_{t-1}, \theta) \mathrm{d}\theta \\
    &= \cdots \\
    &= \prod_{t=1}^T p(y_t\mid \xi_t, h_{t-1})
\end{align}
where $p(y_t\mid \xi_t, h_{t-1}) = \int p(\theta\mid h_{t-1})p(y_t\mid \xi_t, h_{t-1}, \theta)\mathrm{d}\theta$ and the intermediate steps were completed by induction. 

With the above distributions carefully defined, we consider the expected information gain as $\mathcal{I}_T(\phi) = \E_{p_\phi(y_{1:T}, \xi_{1:T})}\{\IG_\theta(y_{1:T}, \xi_{1:T})\}$. Note that the information gain itself does not depend on the policy network, but only on the specific set of designs that are seen. The dependency on the policy network is only introduced in the outer expectation of the EIG.

\subsection{Derivation of EIG gradient}
\label{app:eig_grad_expression}

We restate our theorem for the gradient of the EIG in full notation below. 

\begin{theorem}[Gradient of EIG]
\label{prop:eig_grad_app}
    The following expression for the gradient of the EIG holds:
    \begin{equation}
    \label{eqn_app:adaptive_design_grad_prop}
        \begin{split}
            \nabla_\phi \mathcal{I}_T(\phi) &= \mathbb{E}_{p(\theta)q(\delta_{1:T}, \epsilon_{1:T})}\Bigg\{\frac{\mathrm{d}}{\mathrm{d}\phi} \log p(y_{1:T}|\xi_{1:T}, \theta) - \frac{\partial}{\partial y_{1:T}} \log p(y_{1:T}|\xi_{1:T}) \frac{\partial y_{1:T}}{\partial \phi} - \frac{\partial}{\partial \xi_{1:T}} \log p(y_{1:T}|\xi_{1:T}) \frac{\partial \xi_{1:T}}{\partial \phi}\Bigg\}
        \end{split}
    \end{equation}
    where $y_{1:T} = \{g_t^1(\delta_{1:t}, \epsilon_{1:t}, \theta, \phi)\}_{t=1}^T, \; \xi_{1:T} = \{g_{t}^2(\delta_{1:t-1}, \epsilon_{1:t}, \theta, \phi)\}_{t=1}^T \sim p_\phi(\cdot, \cdot \mid \theta)$ for $\delta_{1:T}, \epsilon_{1:T} \sim q(\cdot, \cdot)$ are obtained by reparameterisation.
    In the specific case of static (non-adaptive) designs, the expression simplifies to:
    \begin{equation}
        \begin{split}
            \nabla_{\x} \mathcal{I}_T(\x) = \mathbb{E}_{p(\theta)q(\delta_{1:T})}\bigg\{\frac{\mathrm{d}}{\mathrm{d} \x} & \log p(y_{1:T}|\theta; \x) - \frac{\partial}{\partial y_{1:T}} \log p(y_{1:T}; \x)  \frac{\partial y_{1:T}}{\partial \x}\bigg\}
        \end{split}
    \end{equation}
    where again we use the reparameterisation trick such that $y_{1:T} = \{g_{t}(\delta_{1:t}, \theta, \x)\}_{t=1}^T \sim p(\cdot; \x)$ for $\delta_{1:T} \sim q(\cdot)$.
\end{theorem}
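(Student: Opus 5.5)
We will write the EIG in its mutual-information form and then differentiate under the expectation using the reparameterisation of Theorem~\ref{prop:eig_grad_app}. By Appendix~\ref{app:teig} the posterior, and hence the information gain, depends on the data only through $(\y,\x)$. Writing $\IG_\theta$ via Bayes' rule with the policy-independent likelihood $p(\y\mid\x,\theta)=\prod_t p(y_t\mid\xi_t,h_{t-1},\theta)$ and marginal $p(\y\mid\x)$, we get
\begin{equation*}
\mathcal{I}_T(\phi)=\mathbb{E}_{p(\theta)p_\phi(\y,\x\mid\theta)}\big\{\log p(\y\mid\x,\theta)-\log p(\y\mid\x)\big\}.
\end{equation*}
Here the $\pi_\phi$ factors cancel between numerator and denominator, exactly as in \eqref{eqn:policy_independent_posterior}. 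This form is the key point. The integrand $f(\y,\x,\theta)=\log p(\y\mid\x,\theta)-\log p(\y\mid\x)$ has no explicit dependence on $\phi$; $\phi$ enters only through the sampling distribution.

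Next we reparameterise the rollout recursively. We set $\xi_t=g_t^2(\delta_{1:t-1},\epsilon_{1:t},\theta,\phi)$, which is $\pi_\phi$ applied to $h_{t-1}$ with noise $\epsilon_t$. We set $y_t=g_t^1(\delta_{1:t},\epsilon_{1:t},\theta,\phi)$, which is the likelihood reparameterisation with noise $\delta_t$ at design $\xi_t$ and history $h_{t-1}$. The base measure $p(\theta)q(\delta_{1:T},\epsilon_{1:T})$ is then independent of $\phi$, and
\begin{equation*}
\mathcal{I}_T(\phi)=\mathbb{E}_{p(\theta)q}\{f(g(\delta,\epsilon,\theta,\phi),\theta)\}.
\end{equation*}
Assuming enough regularity to exchange $\nabla_\phi$ and the expectation (differentiability of $g$, of the log-likelihood and of the log-marginal, plus a dominating integrable bound), we get $\nabla_\phi\mathcal{I}_T=\mathbb{E}\{\tfrac{\mathrm{d}}{\mathrm{d}\phi}f\}$.

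We then apply the chain rule to each term. The first term is simply the total derivative $\tfrac{\mathrm{d}}{\mathrm{d}\phi}\log p(\y\mid\x,\theta)$ through $(\y,\x)$, with $\theta$ held fixed since $\theta$ is not reparameterised through $\phi$. The second term has no explicit $\phi$, so its total derivative is
\begin{equation*}
\frac{\partial}{\partial\y}\log p(\y\mid\x)\,\frac{\partial\y}{\partial\phi}+\frac{\partial}{\partial\x}\log p(\y\mid\x)\,\frac{\partial\x}{\partial\phi}.
\end{equation*}
Here $\partial\y/\partial\phi$ and $\partial\x/\partial\phi$ are total Jacobians of the recursive map, which include the propagation through earlier histories. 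Combining the two terms gives \eqref{eqn_app:adaptive_design_grad_prop}.

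For the static case, $\x$ is itself the parameter and is not random, so $\partial\x/\partial\x=I$. The naive chain rule would then leave the term $\partial_{\x}\log p(\y;\x)$. We plan to show that its expectation vanishes. Under the joint $p(\theta)p(\y\mid\theta;\x)$ the data marginal is exactly $p(\y;\x)$, so
\begin{equation*}
\mathbb{E}_{p(\y;\x)}\{\nabla_{\x}\log p(\y;\x)\}=\nabla_{\x}\int p(\y;\x)\,\mathrm{d}\y=0.
\end{equation*}
This is the vanishing of the expected score. We then drop the term and keep only the Stein-score term.

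We expect the main obstacle to be bookkeeping rather than any deep difficulty. In the adaptive case, the explicit $\partial_{\xi_t}$ of $\log p(\y\mid\x)$ must be distinguished from the dependence through histories $h_{t-1}$ that contain earlier $y$ and $\xi$. We will treat $\log p(\y\mid\x)$ as a function of the full vector $(\y,\x)$, with partial derivatives in all coordinates, and use total derivatives $\partial(\y,\x)/\partial\phi$ for the rollout. This makes the chain rule exact.

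We will also note why the Fisher term survives in the adaptive case, unlike the static case. There the expectation is under $p_\phi(\y,\x)$, and the score-vanishing argument does not apply because $\x$ is random and depends on $\phi$. We will further state the regularity assumptions needed to justify the interchange of differentiation and integration.
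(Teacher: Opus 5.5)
Your proposal is correct and follows essentially the same route as the paper. Both write the EIG in mutual-information form (with the $\pi_\phi$ factors cancelling), reparameterise the rollout recursively, differentiate under the expectation and apply the chain rule; in the static case both remove the explicit $\partial_{\x}\log p(\y;\x)$ term via the vanishing expected score. The only cosmetic difference is that you identify the law of $g_{1:T}(\delta_{1:T},\theta,\x)$ directly as $p(\y;\x)$, whereas the paper undoes the reparameterisation and integrates $\theta$ out through the posterior, which amounts to the same computation.
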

\begin{proof}
To derive this expression we start by considering the mutual information form of the EIG as follows:
\begin{align}
    \mathcal{I}_T(\phi) &= \E_{p_\phi(y_{1:T}, \xi_{1:T})}\{\IG_\theta(y_{1:T}, \xi_{1:T})\} \\
    &= \E_{p_\phi(y_{1:T}, \xi_{1:T})}\{\E_{p(\theta\mid y_{1:T}, \xi_{1:T})}\log p(\theta\mid y_{1:T}, \xi_{1:T}) - \E_{p(\theta)} \log p(\theta) \} \\
    &= \E_{p(\theta) p_\phi(y_{1:T}, \xi_{1:T}\mid \theta)}\{\log p(\theta\mid  y_{1:T}, \xi_{1:T})\} + C \\
    &= \E_{p(\theta) p_\phi(y_{1:T}, \xi_{1:T}\mid \theta)}\{\log p(y_{1:T}\mid  \xi_{1:T}, \theta) - \log p(y_{1:T}\mid \xi_{1:T})\} +C \label{eqn:grad_eig1}
\end{align}

where $C$ denotes terms constant in $\phi$. Firstly, we consider the full gradient expression in the general case of stochastic proposals, given in \cref{eqn:adaptive_design_grad_prop}. This expression follows by reparameterising, differentiating under the expectation, and applying the chain rule. Explicitly, we define the reparameterisation recursively by $y_t = g_t^1(\delta_t, \xi_t, \theta, h_{t-1})$ (recall $y_t \sim p(\cdot\mid  \xi_t, \theta, h_{t-1})$) and $\xi_t = g_t^2(\epsilon_t, h_{t-1}, \phi)$ (recall $\xi_t \sim \pi_\phi(\cdot\mid h_{t-1})$). Rolling out the recursive reparameterisation we obtain\footnote{We use shorthand notation $y_{1:T} = g_{1:T}^1(\delta_{1:T}, \epsilon_{1:T}, \theta, \phi)$ to mean $y_{1:T} = \{g_t^1(\delta_{1:t}, \epsilon_{1:t}, \theta, \phi)\}_{t=1}^T$.} $y_{1:T} = g_{1:T}^1(\delta_{1:T}, \epsilon_{1:T}, \theta, \phi), \; \xi_{1:T} = g_{1:T}^2(\delta_{1:T-1}, \epsilon_{1:T}, \theta, \phi) \sim p_\phi(\cdot, \cdot\mid \theta)$ for $\epsilon_{1:T}, \delta_{1:T}\sim q(\cdot, \cdot)$, as stated in the theorem. Then the gradient of the objective can be written as:
\begin{align}
    \nabla_\phi \mathcal{I}_T(\phi) 
    &= \E_{p(\theta) q(\epsilon_{1:T}, \delta_{1:T})}\Big\{\frac{\mathrm{d}}{\mathrm{d} \phi} \log p(g_{1:T}^1(\dots)\mid  g^2_{1:T}(\dots), \theta) - \frac{\mathrm{d}}{\mathrm{d} \phi} \log p(g^1_{1:T}(\dots)\mid g^2_{1:T}(\dots))\Big\} \label{eqn:standard_nmc_reparam} \\ 
    &= \E_{p(\theta) q(\epsilon_{1:T}, \delta_{1:T})}\Bigg\{\frac{\mathrm{d}}{\mathrm{d} \phi} \log p(g^1_{1:T}(\dots)\mid  g^2_{1:T}(\dots), \theta)  \notag\\ &  \qquad \qquad \qquad \qquad \qquad \qquad -\frac{\partial}{\partial y_{1:T}} \log p(y_{1:T}\mid \xi_{1:T})\Bigg| _{\stackrel{y_{1:T} = g_{1:T}^1(\delta_{1:T}, \epsilon_{1:T}, \theta, \phi)}{\xi_{1:T} = g^2_{1:T}(\delta_{1:T-1}, \epsilon_{1:T}, \theta, \phi)}}\frac{\partial g_{1:T}^1(\delta_{1:T}, \epsilon_{1:T}, \theta, \phi)}{\partial \phi} \notag\\ 
    & \qquad \qquad \qquad \qquad \qquad \qquad -\frac{\partial}{\partial \xi_{1:T}} \log p(y_{1:T}\mid \xi_{1:T})\Bigg| _{\stackrel{y_{1:T} = g_{1:T}^1(\delta_{1:T}, \epsilon_{1:T}, \theta, \phi)}{\xi_{1:T} = g^2_{1:T}(\delta_{1:T-1}, \epsilon_{1:T}, \theta, \phi)}}\frac{\partial g_{1:T}^2(\delta_{1:T-1}, \epsilon_{1:T}, \theta, \phi)}{\partial \phi}\Bigg\}.
\end{align}
This is the main gradient expression of the theorem as required. 

As stated in the beginning, when the designs are static, they are uniquely determined up front and have no dependence on the data or unknown parameters of the model, and so we can simplify the above expression. Starting again from \cref{eqn:grad_eig1}:
\begin{align}
    \E_{p(\theta) p_\phi(y_{1:T}, \xi_{1:T}\mid \theta)}&\{\log p(y_{1:T}\mid  \xi_{1:T}, \theta) - \log p(y_{1:T}\mid \xi_{1:T})\} +C \\
    &= \E_{p(\theta)p(y_{1:T}\mid \theta; \x)}\{\log p(y_{1:T}\mid \theta; \x) - \log p(y_{1:T}; \x)\} + C    
\end{align}
where we have rewritten the marginal distribution of the data given the designs as $p(y_{1:T}\mid \xi_{1:T}) = p(y_{1:T}; \x)$ in the static case (to emphasise the deterministic nature of the designs). This is now easy to reparameterise as $y_t = g_t(\delta_t, \theta, h_{t-1})$ which rolls out to $y_{1:T} = \{g_{t}(\delta_{1:t}, \theta, \xi_{1:t})\}_{t=1}^T$ with $\delta_{1:T} \sim q(\cdot)$. Now differentiating under the expectation and applying the chain rule we obtain:
\begin{align}
    \nabla_{\x} \mathcal{I}_T(\x) = \E_{p(\theta)q(\delta_{1:T})}\bigg\{\frac{\mathrm{d}}{\mathrm{d} \x} \log p(&g_{1:T}(\delta_{1:T}, \theta, \x)\mid \theta; \x)  \notag \\ & -\frac{\partial}{\partial y_{1:T}} \log p(y_{1:T}; \x)\Big|_{y_{1:T} = g_{1:T}(\delta_{1:T}, \theta, \x)}  \frac{\partial g_{1:T}(\delta_{1:T}, \theta, \x)}{\partial \x}  \notag \\
    & -\frac{\partial}{\partial \x} \log p(y_{1:T}; \x)\Big|_{y_{1:T} = g_{1:T}(\delta_{1:T}, \theta, \x)} \bigg\}.
\end{align}
Now we show that the final term is zero in expectation. To do so we undo the original reparameterisation by writing $\delta_{1:T} = h_{1:T}(y_{1:T}, \theta, \x) \sim q(\cdot)$ for $y_{1:T} \sim p(\cdot; \x)$, where $h$ is the inverse of $g$, i.e. $y_{1:T} = g_{1:T}(h_{1:T}(y_{1:T}, \theta, \x), \theta, \x)$. Then we have:
\begin{align}
    \E_{p(\theta)q(\delta_{1:T})}&\bigg\{\frac{\partial}{\partial \x} \log p(y_{1:T}; \x)\Big|_{y_{1:T} = g_{1:T}(\delta_{1:T}, \theta, \x)}\bigg\} \\
    &= \E_{p(\theta)p(y_{1:T}\mid \theta; \x)}\bigg\{\frac{\partial}{\partial \x} \log p(y_{1:T}; \x)\Big|_{y_{1:T} = g_{1:T}(h_{1:T}(y_{1:T}, \theta, \x), \theta, \x)}\bigg\} \\
    &= \E_{p(\theta)p(y_{1:T}\mid \theta; \x)}\bigg\{\frac{\partial}{\partial \x} \log p(y_{1:T}; \x)\bigg\} \\
    &= \iint \frac{\partial}{\partial \x} \log p(y_{1:T}; \x) p(\theta)p(y_{1:T}\mid \theta; \x) \mathrm{d}y_{1:T} \mathrm{d}\theta \\
    &= \iint \frac{\frac{\partial}{\partial \x} p(y_{1:T}; \x)}{p(y_{1:T}; \x)} p(y_{1:T}; \x) p(\theta\mid y_{1:T}) \mathrm{d}y_{1:T} \mathrm{d}\theta \\
    &= \iint \frac{\partial}{\partial \x} p(y_{1:T}; \x) p(\theta\mid y_{1:T}) \mathrm{d}y_{1:T} \mathrm{d}\theta \\
    &= \int \frac{\partial}{\partial \x} p(y_{1:T}; \x) \int p(\theta\mid y_{1:T}) \mathrm{d}\theta \mathrm{d}y_{1:T} \\
    &= \frac{\partial}{\partial \x} \int p(y_{1:T}; \x) \mathrm{d}y_{1:T} = 0.
\end{align}
So we have removed one intractable term from the gradient expression in the static design case, providing the second part of the result from the theorem. 
\end{proof}

One might be tempted to try a similar trick to eliminate the last term in the general case. 
However, when using policies it turns out that this term is no longer zero in expectation.
At a high level, this is because it is necessary to reparameterise the designs, which introduces a dependence on $\theta$ which cannot be integrated away as in the penultimate line of the proof above. For completeness, we show this below. Starting with the last term and applying the inverse reparameterisation trick, analogous to the deterministic case with $\delta_{1:T} = h_{1:T}^1(y_{1:T}, \xi_{1:T}, \theta, \phi), \epsilon_{1:T} = h_{1:T}^2(y_{1:T-1}, \xi_{1:T}, \theta, \phi)$, we have:
\begin{align}
    \E&_{p(\theta)q(\delta_{1:T}, \epsilon_{1:T})}\bigg\{\frac{\partial}{\partial \xi_{1:T}} \log p(y_{1:T}\mid \xi_{1:T})\Bigg| _{\stackrel{y_{1:T} = g_{1:T}^1(\delta_{1:T}, \epsilon_{1:T}, \theta, \phi)}{\xi_{1:T} = g_{1:T}^2(\delta_{1:T-1}, \epsilon_{1:T}, \theta, \phi)}}\frac{\partial g_{1:T}^2(\delta_{1:T-1}, \epsilon_{1:T}, \theta, \phi)}{\partial \phi}\bigg\} \\
    &= \E_{p(\theta)p_\phi(\y, \x\mid \theta)}\bigg\{\frac{\partial}{\partial \x} \log p(\y\mid \x) \times\notag \\ & \qquad \qquad \qquad \qquad \qquad \qquad\underbrace{\frac{\partial g_{1:T}^2(h_{1:T}^1(y_{1:T-1}, \x, \theta, \phi), h_{1:T}^2(y_{1:T-1}, \xi_{1:T-1}, \theta, \phi), \theta, \phi)}{\partial \phi}}_{J_\phi(y_{1:T-1}, \xi_{1:T}, \theta, \phi)}\bigg\} \\
    &= \E_{p(\theta)p_\phi(\y, \x\mid \theta)}\bigg\{\frac{\partial}{\partial \x} \log \prod_{t=1}^Tp(y_t\mid \xi_t, h_{t-1}) J_\phi(y_{1:T-1}, \xi_{1:T}, \theta, \phi)\bigg\} \\
    &= \E_{p(\theta)p_\phi(\y, \x\mid \theta)}\bigg\{\sum_{t=1}^{T-1}\frac{\partial}{\partial \x} \log p(y_t\mid \xi_t, h_{t-1}) J_\phi(y_{1:T-1}, \xi_{1:T}, \theta, \phi) + \notag \\
    & \qquad \qquad \qquad \qquad \qquad \qquad \qquad \qquad \frac{\partial}{\partial \x} \log p(y_T\mid \xi_T, h_{T-1}) J_\phi(y_{1:T-1}, \xi_{1:T}, \theta, \phi) \bigg\}
\end{align}
Looking at the last term:
\begin{align}
    \E&_{p(\theta)p_\phi(\y, \x\mid \theta)}\bigg\{\frac{\partial}{\partial \x} \log p(y_T\mid \xi_T, h_{T-1}) J_\phi(y_{1:T-1}, \xi_{1:T}, \theta, \phi) \bigg\} \\
    &= \iiint \frac{\frac{\partial}{\partial \x} p(y_T\mid \xi_T, h_{T-1})}{p(y_T\mid \xi_T, h_{T-1})} J_\phi(y_{1:T-1}, \x, \theta, \phi) p(\theta\mid \y, \x) \notag \\ & \qquad \qquad \prod_{t=1}^Tp(y_t\mid \xi_t, h_{t-1}) \pi_\phi (\xi_t\mid h_{t-1}) \rmd{\y}\rmd{\x}\rmd{\theta} \\
    &= \iiint \frac{\partial}{\partial \x} p(y_T\mid \xi_T, h_{T-1}) J_\phi(y_{1:T-1}, \x, \theta, \phi) p(\theta\mid \y, \x) \pi_\phi(\xi_{T}\mid h_{T-1}) \notag \\
    & \qquad  \qquad \prod_{t=1}^{T-1}p(y_t\mid \xi_t, h_{t-1}) \pi_\phi (\xi_t\mid h_{t-1}) \rmd{\y}\rmd{\x}\rmd{\theta}. 
\end{align}
The problem now is that we cannot eliminate the dependence on $y_T$ from the posterior by integrating over $\theta$ since the Jacobian term, which arose from the reparameterisation of the designs, depends on $\theta$.

\subsection{Constant Additive Noise Models}

Our gradient expression can be further simplified when the model is known to have constant additive noise, i.e. $y = f(\xi, \theta) + \varepsilon$ where $\varepsilon$ is a realisation of an independent noise variable. In this case, $p(y \mid \theta, \xi) = p_\varepsilon(y - f(\xi, \theta))$ so:
\begin{align*}
    \mathbb{H}[p(y \mid \theta, \xi)] &= \mathbb{E}_{p(y\mid \theta, \xi)} \left\{\log p(y \mid \theta, \xi)\right\} \\ 
    &= \int \log p\left(y \mid \theta, \xi\right) p\left(y \mid \theta, \xi\right) \rmd y \\
    &= \int \log p_\varepsilon\left(y - f(\xi, \theta)\right) p_\varepsilon\left(y - f(\xi, \theta)\right) \rmd y \\
    &= \int \log p_\varepsilon(z) p_\varepsilon(z) \mathrm{d} z \\
    &= \mathbb{H}[p_\varepsilon(\varepsilon)] \equiv \text{constant.}
\end{align*}

Therefore, in such cases the first term of our EIG gradient expression (\cref{eqn:adaptive_design_grad_prop}) is also identically zero and the gradient of the EIG only depends on the intractable score terms which we propose to approximate.

\section{Error Bounds}
\label{app:error_bounds}

In this section we show that the accuracy of our score-based EIG gradient estimator is directly controlled by the accuracy of the score approximation, in the sense that the $\ell_2$ error in the gradient estimator is bounded above by a constant times the mean score error. The constant depends only on the Lipschitz properties of the map from policy parameters to rolled-out trajectories $(\y, \x)$.

\paragraph{Set-up.}
Let $z_\phi := (\y, \x) \in \mathbb{R}^{T(d_y + d_\xi)}$ denote the rolled-out trajectory under the policy with parameters $\phi$, viewed as a function of the reparameterisation variables $(\delta_{1:T}, \epsilon_{1:T})$, the latent $\theta$, and $\phi$:
\begin{equation}
    z_\phi(\delta_{1:T}, \epsilon_{1:T}, \theta) = \big(g^1_{1:T}(\delta_{1:T}, \epsilon_{1:T}, \theta, \phi),\, g^2_{1:T}(\delta_{1:T-1}, \epsilon_{1:T}, \theta, \phi)\big).
\end{equation}
Let $s_\psi : \mathbb{R}^{T(d_y + d_\xi)} \to \mathbb{R}^{T(d_y + d_\xi)}$ be a score approximation and define the corresponding score-based gradient estimator by replacing the marginal score in \Cref{eqn_app:adaptive_design_grad_prop} with $s_\psi$:
\begin{equation}
\label{eqn:app_score_grad_estimator}
    \widehat{G}_\psi(\phi) = \E_{p(\theta) q(\delta_{1:T}, \epsilon_{1:T})}\!\left[\frac{\mathrm{d}}{\mathrm{d}\phi}\log p(\y\mid\x, \theta) - s_\psi(\y, \x)\,\frac{\partial z_\phi}{\partial \phi}\right].
\end{equation}

\begin{theorem}
\label{prop:error_bound}
    Fix a tolerance $\eta > 0$. Assume that, for all $(\delta_{1:T}, \epsilon_{1:T}, \theta)$ in the support of $p(\theta) q(\delta_{1:T}, \epsilon_{1:T})$, the reparameterisation map $\phi \mapsto z_\phi(\delta_{1:T}, \epsilon_{1:T}, \theta)$ is uniformly $L$-Lipschitz, i.e.\ there exists $L > 0$ with
    \begin{equation}
        \norm{z_\phi(\delta_{1:T}, \epsilon_{1:T}, \theta) - z_{\phi'}(\delta_{1:T}, \epsilon_{1:T}, \theta)}_2 \le L \norm{\phi - \phi'}_2 \quad \text{for all } \phi, \phi'.
    \end{equation}
    Suppose further that the score approximation satisfies the mean error bound
    \begin{equation}
    \label{eqn:app_score_error_bound}
        \E_{p,q}\!\left[\norm{s_\psi(\y, \x) - \nabla_{(\y, \x)}\log p(\y\mid\x)}_2\right] \le \eta / L.
    \end{equation}
    Then the $\ell_2$ error of the score-based gradient estimator is bounded by $\eta$ for every $\phi$:
    \begin{equation}
        \norm{\nabla_\phi \mathcal{I}_T(\phi) - \widehat{G}_\psi(\phi)}_2 \le \eta.
    \end{equation}
\end{theorem}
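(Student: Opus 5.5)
Since \Cref{prop:eig_grad_app} gives the exact gradient as an expectation over $p(\theta)q(\delta_{1:T},\epsilon_{1:T})$, the plan is to subtract the estimator $\widehat{G}_\psi(\phi)$ from it term by term. The first term, $\frac{\mathrm{d}}{\mathrm{d}\phi}\log p(\y\mid\x,\theta)$, appears identically in both expressions under the same reparameterisation, so it cancels exactly. Both intractable score terms can be written jointly as the row vector $\nabla_{(\y,\x)}\log p(\y\mid\x)$ multiplied by the Jacobian $\partial z_\phi/\partial\phi$, since $z_\phi=(g^1_{1:T},g^2_{1:T})$. This leaves
\begin{equation*}
    \nabla_\phi \mathcal{I}_T(\phi) - \widehat{G}_\psi(\phi) = \E_{p(\theta)q(\delta_{1:T},\epsilon_{1:T})}\Big[\big(s_\psi(\y,\x) - \nabla_{(\y,\x)}\log p(\y\mid\x)\big)\,\frac{\partial z_\phi}{\partial \phi}\Big],
\end{equation*}
where the score and its approximation are evaluated at $(\y,\x)=z_\phi(\delta_{1:T},\epsilon_{1:T},\theta)$.

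Next, I will move the norm inside the expectation using Jensen's inequality (convexity of $\norm{\cdot}_2$). For each sample, I bound the norm of the product of the score error vector and the Jacobian by $\norm{s_\psi-\nabla\log p}_2\,\norm{\partial z_\phi/\partial\phi}_{\mathrm{op}}$. The operator norm is at most $L$ because the map $\phi\mapsto z_\phi$ is uniformly $L$-Lipschitz. Concretely, for a differentiable $L$-Lipschitz map, $\norm{J v}_2=\lim_{h\to0}\norm{z_{\phi+hv}-z_\phi}_2/h\le L\norm{v}_2$, and the bound carries over to the transpose, which is the orientation actually being multiplied, since operator norms are invariant under transposition. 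Pulling $L$ out of the expectation gives
\[
\norm{\nabla_\phi \mathcal{I}_T(\phi) - \widehat{G}_\psi(\phi)}_2 \le L\,\E_{p,q}\big[\norm{s_\psi-\nabla_{(\y,\x)}\log p(\y\mid\x)}_2\big] \le L\cdot\eta/L=\eta,
\]
by the assumed mean error bound \eqref{eqn:app_score_error_bound}. Here the expectation in that assumption is interpreted as being over the trajectories $z_\phi$ generated under the current $\phi$.

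The main point of care is the Jacobian/Lipschitz step together with the interpretation of the expectation. First, differentiability of $z_\phi$ in $\phi$ is already implicit in \Cref{prop:eig_grad_app}, so the Jacobian exists and the Lipschitz condition gives the operator-norm bound pointwise on the support. Second, the hypothesis must be read as an expectation under the rollout distribution $p(\theta)q$ pushed through $z_\phi$ for the given $\phi$. Then "for every $\phi$" means the score accuracy assumption is required at each $\phi$. If needed, I will remark that this is exactly what the notation $\E_{p,q}$ denotes. The remaining step, interchanging the norm and the expectation, only requires integrability, which is already assumed for the gradient expression to hold.
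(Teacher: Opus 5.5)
Your proposal is correct and follows the paper's proof essentially step for step. You cancel the likelihood term, write the difference as the expected score error times $\partial z_\phi/\partial\phi$, apply Jensen and submultiplicativity, bound the operator norm by $L$, and invoke the mean score-error assumption; the only minor difference is that you get $\norm{\partial z_\phi/\partial\phi}_{\mathrm{op}}\le L$ directly from the directional-derivative limit, using differentiability already implicit in the gradient formula, whereas the paper cites Rademacher's theorem.
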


\begin{proof}
    Subtracting the approximate gradient $\widehat{G}_\psi$ in \Cref{eqn:app_score_grad_estimator} from the EIG gradient expression \Cref{eqn_app:adaptive_design_grad_prop} gives
    \begin{equation}
        \nabla_\phi \mathcal{I}_T(\phi) - \widehat{G}_\psi(\phi) = \E_{p,q}\!\left[\big(s_\psi(\y, \x) - \nabla_{(\y, \x)}\log p(\y\mid\x)\big)\,\frac{\partial z_\phi}{\partial \phi}\right].
    \end{equation}
    Taking $\ell_2$ norms, by Jensen's inequality and submultiplicativity of the operator norm,
    \begin{align}
        \norm{\nabla_\phi \mathcal{I}_T(\phi) - \widehat{G}_\psi(\phi)}_2
        &\le \E_{p,q}\!\left[\norm{\big(s_\psi(\y, \x) - \nabla_{(\y, \x)}\log p(\y\mid\x)\big)\,\frac{\partial z_\phi}{\partial \phi}}_2\right] \\
        &\le \E_{p,q}\!\left[\norm{s_\psi(\y, \x) - \nabla_{(\y, \x)}\log p(\y\mid\x)}_2 \cdot \norm{\frac{\partial z_\phi}{\partial \phi}}_{\mathrm{op}}\right].
    \end{align}
    Since $\phi \mapsto z_\phi$ is $L$-Lipschitz almost surely, Rademacher's theorem \citep{evans2025measure} gives $\norm{\partial z_\phi / \partial \phi}_{\mathrm{op}} \le L$ almost surely. Pulling this constant out of the expectation gives
    \begin{equation}
        \norm{\nabla_\phi \mathcal{I}_T(\phi) - \widehat{G}_\psi(\phi)}_2 \le L \cdot \E_{p,q}\!\left[\norm{s_\psi(\y, \x) - \nabla_{(\y, \x)}\log p(\y\mid\x)}_2\right] \le \eta,
    \end{equation}
    where the final inequality uses the assumed score-error bound \Cref{eqn:app_score_error_bound}.
\end{proof}

\Cref{prop:error_bound} confirms the intuition motivating our approach: the EIG-gradient error is directly controlled by the score-matching error. Reducing the mean score error by a factor $k$ reduces the gradient error by the same factor (up to the Lipschitz constant $L$ of the reparameterisation, which is independent of the score network). Since score-matching training has access to unlimited samples from the model, achieving small mean score error is in principle only a question of network capacity and training budget.

\paragraph{Remark (component-separated bound).}
A slightly sharper bound is available when the $y$- and $\xi$-reparameterisation maps have different Lipschitz constants. If $\phi \mapsto y_{1:T}$ and $\phi \mapsto \xi_{1:T}$ are uniformly Lipschitz with constants $L_y$ and $L_\xi$ respectively, then by the same argument applied to each score component separately,
\begin{equation}
    \label{eqn:component_separate_err_bounds}
    \norm{\nabla_\phi \mathcal{I}_T(\phi) - \widehat{G}_\psi(\phi)}_2 \le L_y \cdot \E_{p,q}\norm{e_y}_2 + L_\xi \cdot \E_{p,q}\norm{e_\xi}_2,
\end{equation}
where $e_y$ and $e_\xi$ are the $y$- and $\xi$-component errors of the score network. This refinement is useful when one of the two reparameterisation maps is significantly more regular than the other.

\section{Bias-Variance Decompositions}
\label{app:bias_variance_decomp}

We now decompose errors in the score network and resulting EIG gradient approximation into interpretable bias and variance components, revealing the behaviour of our EIG gradient approximation.

\paragraph{Notation.} For notational simplicity we drop the $t$ component of our data and design variables and we write $s(\xi,y)$ for the score network, $\mu(\xi,y) = \nabla_{y,\xi}\log p(y|\xi)$ for the true marginal score. We define $\Sigma_\nu(\xi,y) := \Cov_{\theta|\xi,y}(\nabla \log p(y|\xi,\theta))$ as the posterior covariance of the conditional score.

\subsection{Score Error}

Firstly we consider a bias-variance decomposition of the score matching loss. For the MSM loss (\Cref{eqn:msm_loss}), adding and subtracting $\mu$ inside the loss gives
\begin{align}
    \mathcal{L} &= \E_{\xi,y}\E_{\theta|\xi,y}\|s(\xi,y) - \nabla \log p(y|\xi,\theta)\|^2 \\
    &= \E_{\xi,y}\|s(\xi,y) - \mu(\xi,y)\|^2 + \E_{\xi,y}\Tr\Sigma_\nu(\xi,y),
\end{align}
where we used $\E_{\theta|\xi,y}[\nabla \log p(y|\xi,\theta)] = \mu$ to eliminate the cross term. In this expression the second term isolates the irreducible target variance $\E_{\xi,y}\Tr\Sigma_\nu(\xi,y)$ from the learnable score network error in the first term. Applying the standard bias--variance identity $\E\|e\|^2 = \|\E e\|^2 + \Tr\Cov(e)$ to the first term with error $e = s - \mu$ gives
\begin{equation}
    \mathcal{L} = \underbrace{\|B\|^2}_{\text{bias}} + \underbrace{V_e}_{\text{error variance}} + \underbrace{V_\nu}_{\text{target noise}},
\end{equation}
with $B := \E_{\xi,y}[s-\mu]$, $V_e := \Tr\Cov_{\xi,y}(s-\mu)$ and $V_\nu := \E_{\xi,y}\Tr\Sigma_\nu(\xi, y)$. Empirically, in our experiments we found the dominant term in this decomposition to be the error variance $V_e$ while the irreducible target variance $V_\nu$ was insignificant in the overall loss.

\subsection{EIG Gradient Error}

We now consider analogous decompositions for the EIG gradient error of \scorebed. For ease of notation we consider gradients with respect to a single static design $\xx$, with the policy gradient versions coming from a single pushback through the policy network Jacobian. 

Recalling the form of the \scorebed EIG gradient approximation in \Cref{eqn:adaptive_design_grad_prop}, the estimate is formed by an expectation over $y \sim p(\cdot | \xx)$ of the per-sample integrand which we denote by:
\begin{equation}
    g_s(\xx,y) = s^\xi(\xx,y) + J(\xx,y)^\top s^y(\xx,y),
\end{equation}
where $s^\xi, s^y$ are the design- and data-score outputs of the network and $J(\xx,y) = \partial y/\partial\xx$ is the reparameterisation Jacobian. We write $g_\mu$ for the same integrand built with the true score $\mu$. The \scorebed gradient approximation is then realised by an $N$-sample Monte Carlo estimate of the expectation which we denote
\begin{equation}
    \hat{\mathcal{G}}_N(\xx) := \frac1N\sum_{i=1}^N g_s(\xx,y_i),
\end{equation}
while the true gradient is denoted $\mathcal{G}(\xx) = \E_{y|\xx}[g_\mu(\xx,y)]$. 

We now apply the standard bias--variance decomposition to the MSE of the \scorebed EIG gradient approximation. Since the target $\mathcal{G}(\xx)$ is deterministic, the identity $\E\|Z - c\|^2 = \|\E Z - c\|^2 + \Tr\Cov(Z)$ applied to $Z = \hat{\mathcal{G}}_N(\xx)$ gives
\begin{equation}
  \mathrm{MSE}_N(\xx) := \E_{\yn|\xx}\big\|\hat{\mathcal{G}}_N(\xx) - \mathcal{G}(\xx)\big\|^2 = \underbrace{\big\|\E_{\yn|\xx}[\hat{\mathcal{G}}_N(\xx)] - \mathcal{G}(\xx)\big\|^2}_{\text{squared bias}} + \underbrace{\Tr\Cov_{\yn|\xx}(\hat{\mathcal{G}}_N(\xx))}_{\text{variance}}.
\end{equation}
For the bias, linearity of the sample average gives $\E_{\yn|\xx}[\hat{\mathcal{G}}_N(\xx)] = \E_{y|\xx}[g_s]$, so
\begin{equation}
  \E_{\yn|\xx}[\hat{\mathcal{G}}_N(\xx)] - \mathcal{G}(\xx) = \E_{y|\xx}[g_s - g_\mu] =: B(\xx),
\end{equation}
which is independent of $N$. For the variance, the outer samples $y_i \sim p(\cdot|\xx)$ are i.i.d., so the covariance of their average is $1/N$ times the single-sample covariance,
\begin{equation}
  \Tr\Cov_{\yn|\xx}\Big(\tfrac1N\textstyle\sum_{i=1}^N g_s(\xx,y_i)\Big) = \tfrac1N\Tr\Cov_{y|\xx}(g_s) =: \tfrac1N V_s(\xx),
\end{equation}
where $V_s(\xx)$ is the per-sample variance of the score integrand. Combining the two terms,
\begin{equation}
  \boxed{\;\mathrm{MSE}_N(\xx) = \|B(\xx)\|^2 + \tfrac1N V_s(\xx).\;}
\end{equation}

Overall, we observe the variance contributions to the EIG gradient error are scaled down by $\frac{1}{N}$, leaving the \emph{gradient bias as the dominant term} which determines the quality of the policy training.

\paragraph{Relating $B$ to $B(\xx)$.} It remains to show how the bias $B=\E_{\xi,y}[s-\mu]$ in the score matching loss decomposition relates to the EIG gradient bias $B(\xx) = \E_{y|\xx}[g_s - g_\mu]$, and indeed whether reducing $B$ corresponds to a meaningful reduction in $B(\xx)$.

To do so, we observe that $B$ is an \emph{unconditional} mean of the score error over the joint distribution of $(\xi,y)$, whereas $B(\xx)$ is a \emph{conditional} mean of the integrand error at a fixed design. We therefore introduce the design-conditional score bias:
\begin{equation}
    b(\xi) := \E_{y|\xi}[s(\xi,y) - \mu(\xi,y)],
\end{equation}
which recovers $B = \E_\xi[b(\xi)]$ by the tower property. This design-conditional bias offers a conditional version of the bias--variance decomposition of the MSM loss:
\begin{align}
    \E_{\xi,y}\|s(\xi,y) - \mu(\xi,y)\|^2 &= \E_\xi \E_{y|\xi} \|s(\xi, y) - \mu(\xi, y)\|^2 \\
    &= \E_\xi[\|b(\xi)\|^2 + \Tr\Cov_{y|\xi}(s(\xi, y) - \mu(\xi, y))] \\
    &= \underbrace{\E_\xi\|b(\xi)\|^2}_{\text{design-averaged squared bias}} + V_e^{\text{within}},
\end{align}
where $V_e^{\text{within}} = \E_{\xi}[\Tr\Cov_{y|\xi}(s(\xi, y) - \mu(\xi, y))]$, which gives the MSM loss decomposition:
\begin{equation}
    \mathcal{L} = \underbrace{\E_\xi\|b(\xi)\|^2}_{\text{design-averaged squared bias}} + V_e^{\text{within}} + V_\nu.
\end{equation}
This confirms that the loss penalises $\E_\xi\|b(\xi)\|^2$, the mean \emph{squared} conditional bias --- not merely $\|B\|^2 = \|\E_\xi b(\xi)\|^2$. In particular, conditional biases of opposite sign at different designs cancel in $B$ but not in $\E_\xi\|b(\xi)\|^2$.

It remains to connect the conditional score bias $b(\xi)$ to the integrand bias $B(\xx)$. Splitting the integrand error into its design and data components,
\begin{equation}
    g_s - g_\mu = (s^\xi - \mu^\xi) + J(\xx,y)^\top(s^y - \mu^y),
\end{equation}
the design component of $B(\xx)$ is exactly the design block of the conditional bias $b(\xx)$, so the argument above transfers directly: reducing the loss drives down the design-averaged squared design bias. The data component is instead the \emph{Jacobian-weighted} conditional mean $\E_{y|\xx}[J^\top(s^y-\mu^y)]$, which the (unweighted, $J$-agnostic) loss controls only through a Cauchy--Schwarz bound
\begin{equation}
    \big\|\E_{y|\xx}[J^\top(s^y-\mu^y)]\big\|^2 \le \big(\E_{y|\xx}\|J\|_{\mathrm{op}}^2\big)\big(\E_{y|\xx}\|s^y-\mu^y\|^2\big),
    \label{eqn:app_data_error_cauchy_schwarz}
\end{equation}
with a constant set by the reparameterisation geometry. This directly bounds the data-component contribution to $\|B(\xx)\|^2$ by the per-design score MSE $\E_{y|\xx}\|s^y-\mu^y\|^2$ which the loss controls. This is the same mechanism which is identified in our error bound proof of Appendix \ref{app:error_bounds}.

In summary, we confirm through the illustrative bias-variance decompositions presented above that the score matching loss penalises the score network in such a way that reduces the bias in the error of the EIG gradient estimate, either directly in the case of the design score term or indirectly via the Cauchy--Schwarz inequality for the data score term. 

\paragraph{Penalising design-conditional bias.} We briefly highlight two score identities which allow one to directly penalise the design-conditional bias of the score network during training. For $y\in\mathbb{R}^d$, assuming boundary terms vanish, the divergence theorem gives
\begin{equation}
    \E_{y|\xi}[\nabla_y \log p(y|\xi)] = \int \nabla_y p(y|\xi) \,\rmd y = 0,
\end{equation}
and for the design component
\begin{equation}
    \E_{y|\xi}[\nabla_\xi \log p(y|\xi)] = \nabla_\xi \int p(y|\xi) \,\rmd y = 0.
\end{equation}
Thus the true marginal score integrates to zero over $y|\xi$, and at fixed $\xi$ the bias derives from the network alone: $\|b(\xi)\|^2 = \|\E_{y|\xi} s(\xi, y)\|^2$. Given samples $\{y_i\}\sim p(\cdot|\xi)$, one could estimate and penalise this bias as an addition to the MSM loss. In particular, the naive plug-in $\|\frac1M\sum_i s(\xi,y_i)\|^2$ is itself biased upward by $\tfrac1M\Tr\Cov_{y|\xi}(s)$ but an unbiased estimate could be obtained via the U-statistic
\begin{equation}
    U(\xi) = \frac{1}{M(M-1)} \sum_{i \neq j} s(\xi, y_i)^\top s(\xi, y_j).
\end{equation}
This could be included in the score network training objective to provide additional signal on the critical design-conditional bias, while leaving the optimum unchanged.

\section{Experimental Details}
\label{app:exp_details}

Here we detail the experimental set-up used to produce the results in \Cref{sec:experiments}. Our approach is implemented in Jax \citep{bradbury_jax_2018} and all training was performed on a single NVIDIA Hopper H100 GPU. We benchmark against several baselines: PCE, an unbiased Monte Carlo approach MLMC \citep{goda_unbiased_2022}, pre-trained variational marginal and posterior approaches \citep{foster_variational_2019}, a joint variational posterior \citep{foster_unified_2020} trained on the Barber-Agakov lower bound, and finally \iosmc \citep{iqbal_nesting_2024}. We implemented the PCE, MLMC, and all variational baselines ourselves and used the authors' own implementation \citep{iqbal_nesting_2024} of \iosmc. Our results tables report NMC/PCE bounds on EIG performance of the final policies, with error bounds representing standard error with respect to randomness in the selected policies. We also explore the stability of the score network training in Appendix \ref{app:score_training_stability}.

This section is organised as follows. In Appendix \ref{app:bed_tasks} we describe the generative model for each of our experimental tasks. In Appendix \ref{app:policy_nets} we detail the policy networks used and their training parameters. In Appendix \ref{app:method_score} we give details of the score matching approach followed by details of each baseline in Appendix \ref{app:method_baselines}. In Appendix \ref{app:evaluation} we provide details of the PCE/NMC bounds and the sample sizes used for evaluation. Finally, in Appendix \ref{app:nle} we calculate the NLE cost of each method and provide the exact parameters used to obtain the compute-normalised comparisons in our main results. 

Our code is available here: \url{https://github.com/angusphillips/Score_Matching_BED}

\subsection{BED tasks}
\label{app:bed_tasks}

\subsubsection{Source location finding}
\label{app:lf_task}

\paragraph{Model definition.}
For the source location finding task we use the exact set-up of \citet{foster_deep_2021}. The model consists of a Gaussian prior on $K$ hidden sources in $\mathbb{R}^d$, $\theta_k \stackrel{\mathrm{i.i.d.}}{\sim} \mathcal{N}(0, I_d)$ for $k=1, \dots, K$, and a log-normal observation model
\begin{equation}
    \log y \mid \theta, \xi \sim \mathcal{N}\big(\log \mu(\theta, \xi),\, \sigma^2\big),
\end{equation}
where the mean function is a superposition of decaying signals from each source,
\begin{equation}
    \mu(\theta, \xi) = b + \sum_{k=1}^K \frac{\alpha_k}{m + \norm{\theta_k - \xi}^2}.
\end{equation}
We use $b = 10^{-1}$, $m = 10^{-4}$, $\alpha_k = 1$, $\sigma = 0.5$. In the standard location finding task we consider $d = 2$, $K = 2$ and $T = 30$, while the high-dimensional setting considers $d=3$, $K=10$ and $T=30$. We therefore have $y_t \in \mathbb{R}$ and $\xi_t \in \mathbb{R}^2$ or $\xi_t \in \mathbb{R}^3$ respectively. 

\subsubsection{Dynamical systems}
\label{app:dyn_systems_tasks}

The three dynamical-systems tasks follow the experimental set-up of \citet{iqbal_nesting_2024} but, for completeness, we restate the generative models in full here. In each case the latent parameters are physical constants with a log-normal prior $\log \theta \sim \mathcal{N}(0,\, 0.01\, I_{d_\theta})$ and the data $y_t \in \mathbb{R}^{d_y}$ is generated by an Euler--Maruyama discretisation of the corresponding controlled SDE with step size $\delta t = 0.05$,
\begin{equation}
\label{eqn:em_step}
    y_t = y_{t-1} + \delta t \cdot f(y_{t-1}, \xi_t, \theta) + \sqrt{\delta t}\, L\, \epsilon_t,
    \quad \epsilon_t \sim \mathcal{N}(0, I_{d_y}),
\end{equation}
where $f$ is the system-specific drift and $L$ is a diagonal diffusion matrix. We use $T=50$ for all three systems and initialise all systems with $y_0 = 0$. The likelihood $p(y_t \mid y_{t-1}, \xi_t, \theta) = \mathcal{N}(y_t;\, y_{t-1} + \delta t \cdot f(y_{t-1}, \xi_t, \theta),\, \delta t \, L L^\top)$ is Markovian (i.e. \emph{not} i.i.d.\ across $t$).

\paragraph{Stochastic pendulum.}
\label{app:stoch_pend}
The state $y = (q, \dot q) \in \mathbb{R}^2$ contains the angle $q$ and angular velocity $\dot q$ of a damped pendulum driven by a scalar control torque $\xi \in [-1, 1]$. The latent parameters $\theta = (m, l)$ are the mass and length, both with log-normal priors. The drift in \Cref{eqn:em_step} is
\begin{equation}
    f(y, \xi, \theta) = \begin{pmatrix} \dot q \\[2pt] -\dfrac{3\,g}{2\,l}\sin(q) + \dfrac{3\,(\xi - b\,\dot q)}{m\,l^{2}} \end{pmatrix},
\end{equation}
where $g = 9.81$ is gravitational acceleration and $b = 0.1$ is a damping coefficient.\footnote{There is an inconsistency between the equations stated in \citet{iqbal_nesting_2024} and their public implementation: the stated drift omits the factor of $3$ on the input/damping term, while their implementation includes it. To remain consistent with the results in their paper we follow the implementation.} We use the diffusion matrix $L = \mathrm{diag}(0.1, 0.1)$. Designs are constrained to $[-1, 1]$ by clipping, which is done via a tanh output activation on the final stochastic output of the policy network.

\paragraph{Cart-pole.}
\label{app:cart_pole}
The state $y = (s, q, \dot s, \dot q) \in \mathbb{R}^4$ contains the cart position $s$, pole angle $q$ and their velocities. The latent parameters $\theta = (l, m_p, m_c)$ are the pole length, pole mass and cart mass, all with log-normal priors. A scalar control force $\xi \in [-5, 5]$ is applied to the cart. The drift in \Cref{eqn:em_step} is
\begin{align}
    f(y, \xi, \theta) &= \left( \dot{s}, \dot{q}, \ddot{s}, \ddot{q}\right)^T, \\
    \ddot s &= \frac{\xi + m_p \sin(q)\!\left(l \dot q^2 + g \cos(q)\right) - (k_1 s + d_1 \dot s) - (k_2 q + d_2 \dot q)\,\cos(q)/l}{m_c + m_p \sin^2(q)}, \\
    \ddot q &= \frac{-\xi \cos(q) - m_p\, l\, \dot q^2 \cos(q)\sin(q) - (m_c + m_p)\,g\,\sin(q) - (k_1 s + d_1 \dot s)\cos(q) - (k_2 q + d_2 \dot q)\cos^2(q)/l}{l\,(m_c + m_p \sin^2(q))},
\end{align}
with stiffness/damping constants $(k_1, k_2, d_1, d_2) = (0.01, 0.01, 0.01, 0.01)$ and $g = 9.81$. We use the diffusion matrix $L = \mathrm{diag}(0.1, 0.1, 0.1, 0.1)$. Again designs are constrained to $[-5, 5]$ with a tanh output activation on the policy network.

\paragraph{Double link.}
\label{app:double_pend}
The state $y = (q_1, q_2, \dot q_1, \dot q_2) \in \mathbb{R}^4$ contains the angles and angular velocities of the two links. The latent parameters $\theta = (m_1, m_2, l_1, l_2)$ are the masses and lengths of the two links, all with log-normal priors. A $2$-dimensional control torque $\xi \in [-4, 4]\times[-2, 2]$ is applied. The Euler-Lagrange equations give the constrained dynamics
\begin{equation}
    M(q_2; \theta) \begin{pmatrix} \ddot q_1 \\ \ddot q_2 \end{pmatrix} + C(q_2, \dot q_1, \dot q_2; \theta) \begin{pmatrix} \dot q_1 \\ \dot q_2 \end{pmatrix} + \tau(q_1, q_2; \theta) = \xi,
\end{equation}
with mass matrix
\begin{equation}
    M(q_2; \theta) = \begin{pmatrix} (m_1 + m_2) l_1^2 + m_2 l_2^2 + 2 m_2 l_1 l_2 \cos(q_2) & m_2 l_2^2 + m_2 l_1 l_2 \cos(q_2) \\ m_2 l_2^2 + m_2 l_1 l_2 \cos(q_2) & m_2 l_2^2 \end{pmatrix},
\end{equation}
coriolis/centrifugal matrix
\begin{equation}
    C(q_2, \dot q_1, \dot q_2; \theta) = \begin{pmatrix} 0 & -m_2 l_1 l_2 \sin(q_2)\,(2 \dot q_1 + \dot q_2) \\ \tfrac{1}{2} m_2 l_1 l_2 (2 \dot q_1 + \dot q_2) \sin(q_2) & -\tfrac{1}{2} m_2 l_1 l_2 \dot q_1 \sin(q_2) \end{pmatrix},
\end{equation}
and gravity term
\begin{equation}
    \tau(q_1, q_2; \theta) = -g \begin{pmatrix} (m_1 + m_2) l_1 \sin(q_1) + m_2 l_2 \sin(q_1 + q_2) \\ m_2 l_2 \sin(q_1 + q_2) \end{pmatrix}.
\end{equation}
The drift in \Cref{eqn:em_step} is then $f(y, \xi, \theta) = (\dot q_1,\, \dot q_2,\, M^{-1}(\xi - C \dot q - \tau))$. We use the diffusion matrix $L = \mathrm{diag}(0.1, 0.1, 0.1, 0.1)$ and $g = 9.81$. Again designs are constrained with a tanh output activation on the final output of the policy network.

\subsection{Gravimetry}
\label{app:gravimetry_details}

\paragraph{Model definition.}
The gravimetry experiments are inspired by the gravity methods of \citet{Telford_Geldart_Sheriff_1990} for detecting an underground void or well from surface measurements of the local gravitational anomaly. A buried spherical mass anomaly of source strength $\kappa$ at horizontal position $x_0$ and depth $z$ produces, at a surface measurement location $\xi_t$, a vertical gravity perturbation (in $\mu$Gal, with lengths in metres)
\begin{equation}
    g(\xi_t; \theta) = \kappa\, \frac{z}{\big((\xi_t - x_0)^2 + z^2\big)^{3/2}}.
\end{equation}
The unknown parameter is $\theta = (x_0, z, \kappa) \in \mathbb{R} \times \mathbb{R}_{\geq0} \times \mathbb{R}_{\leq0}$. The source strength $\kappa$ (in $\mu\mathrm{Gal}\cdot\mathrm{m}^2$) is negative for a mass deficit such as a void, and relates to the physical radius $R$ and density contrast $\Delta\rho$ of the anomaly through
\begin{equation}
    \kappa = \frac{G\cdot\frac{4}{3}\pi R^3\,\Delta\rho}{10^{-8}},
\end{equation}
i.e.\ the mass-anomaly term $G\,\Delta M$ rescaled from $\mathrm{m\,s^{-2}}$ to $\mu$Gal with $1\ \mu\mathrm{Gal} = 10^{-8}\ \mathrm{m\,s^{-2}}$. We fix $G = 6.674\times10^{-11}\ \mathrm{m^3\,kg^{-1}\,s^{-2}}$ and a density contrast $\Delta\rho = -2200\ \mathrm{kg\,m^{-3}}$ (typical of an air-filled void in rock). Only the product $R^3\Delta\rho$ (equivalently $\kappa$) is identifiable, so we fix $\Delta\rho$ and never infer $R$ and $\Delta\rho$ jointly. The observations are i.i.d.\ additive Gaussian around this nonlinear mean,
\begin{equation}
    y_i \mid \theta, \xi_i \sim \mathcal{N}\!\big(g(\xi_i; \theta),\, \sigma^2\big),
    \qquad
    p(\y \mid \theta, \xi) = \prod_{i=1}^N \mathcal{N}\!\big(y_i;\, g(\xi_i; \theta),\, \sigma^2\big),
\end{equation}
with observation noise scale $\sigma = 1\ \mu$Gal.

The prior on the latent parameters $\theta = (x_0, z, \kappa)$ is
\begin{equation}
    x_0 \sim \mathcal{N}(0,\, \tau^2), \qquad
    \log z \sim \mathcal{N}(\ln 10,\, 0.5^2)\ \ (z \ge 3), \qquad
    R \sim \mathcal{U}(1.5,\, 4)\ \mathrm{m},
\end{equation}
where $\tau = 30$ m and the source strength $\kappa$ is obtained generatively from the sampled radius $R$ through the mapping above with $\Delta\rho$ fixed, giving $\kappa \in [-3940, -210]\ \mu\mathrm{Gal}\cdot\mathrm{m}^2$. This places the horizontal source position within roughly $\pm 60$ m of the origin, the depth at a median of $10$ m (with $95\%$ mass in $[3.7, 27]$ m and a floor at $3$ m), and the source strength at typical void magnitudes.

\subsection{Policy networks}
\label{app:policy_nets}

For each task we use the same policy network architecture across all methods, so the resulting policies differ only through the gradient estimator or alternative method that trains them. In total we use three different types of policy network across the experimental tasks. Firstly, in both settings of location finding we used the DAD policy network \citep{foster_deep_2021}. For each of the dynamical systems models we used a stochastic LSTM with GRU cells as per \citet{iqbal_nesting_2024}. Finally, in the gravimetry experiment we used the iDAD attention network introduced in \citet{ivanova_implicit_2021}. Optimiser settings for the policy networks are also shared across all methods and are summarised in \Cref{tab:policy_training}; these settings are largely borrowed from their original implementations.

\paragraph{DAD policy network (location finding).}
We use the deep adaptive design (DAD) policy of \citet{foster_deep_2021}. Each $(y_t, \xi_t)$ pair is concatenated and passed through a shared encoder MLP with hidden layers $[256]$ and ReLU activations, producing a $16$-dimensional encoding per time step. The history encoding is formed by summing the per-step encodings, which is permutation-invariant by construction, and is passed through a linear emitter to produce the next design $\xi_t \in \mathbb{R}^2$ (we reduce default initialisation of the weights in the final emitter layer from $\mathcal{U}(-\frac{1}{\sqrt{f_\text{in}}}, \frac{1}{\sqrt{f_{\text{in}}}})$ to $\mathcal{U}(-0.05, 0.05)$ to control the dynamic range of designs at initialisation).

\paragraph{Iqbal GRU policy network (dynamical systems).}
For all three dynamical-systems tasks we use the stochastic LSTM policy network of \citet{iqbal_nesting_2024}. Each $(y_t, \xi_t)$ pair is concatenated and passed through a shared encoder MLP with hidden layers $[256, 256]$ and ReLU activations, producing a $64$-dimensional encoding per time step. The sequence of encodings is processed by a $2$-layer GRU of hidden size $64$, with the top-layer hidden state mapped through an emitter MLP with hidden layers $[256, 256]$ to the design dimensionality. Policies in this case are stochastic, following the original implementation by \citet{iqbal_nesting_2024}. We therefore let the emitter output mean $\mu_t$ and we parameterise a per-component learnable standard deviation in log space, so that $\xi_t \sim \mathcal{N}(\mu_t,\, \mathrm{diag}(\sigma^2))$.

\paragraph{Attention policy network (gravimetry).}
For the gravimetry tasks we use a transformer-based policy inspired by the iDAD policy network~\citep{ivanova_implicit_2021}. Each design $\xi_t$ and outcome $y_t$ is first embedded separately by design- and outcome-encoder MLPs, each with hidden layers $[64]$ and GELU activations producing $32$-dimensional embeddings; these are concatenated and passed through a fusion MLP (hidden layers $[64]$, GELU) to give a $32$-dimensional per-step pair representation $r_t$. The set of history representations $\{r_1, \dots, r_t\}$, each augmented with a learned history-type embedding, is prepended with a learned decision token and processed by a single self-attention transformer encoder layer with $4$ heads, model dimension $32$ and a position-wise feed-forward network of hidden size $64$ (using GELU activation, residual connections and layer normalisation throughout). A causal mask ensures that when emitting the design at step $t$ the decision token attends only to itself and the $t$ available history tokens, so the aggregation is invariant to the presentation order of the observations. The design is emitted from the decision-token output using an MLP with hidden layers $[64]$ and GELU activations. As with the DAD policy, we reduce the default initialisation of the final emitter layer weights to $\mathcal{U}(-0.05, 0.05)$ to control the dynamic range of designs at initialisation.

\paragraph{Policy output activations.}
In order to respect any scaling or bound constraints on designs $\xi_t$, we apply output activations to the raw outputs of the policy network which are detailed in \Cref{tab:output_activations}.

\begin{table}[t]
    \centering
    \begin{tabular}{lr}
        \toprule
        Task & output activation \\
        \midrule
        Location finding & $\xi_t = u_t$ \\
        Stochastic pendulum & $\xi_t = \tanh(u_t)$\\
        Cart-pole & $\xi_t = 5 \tanh(u_t)$\\
        Double pendulum & $\xi_t = (4, 2) \odot \tanh(u_t)$\\
        Gravimetry & $\xi_t = 40 u_t$ \\
        \bottomrule
    \end{tabular}
    \caption{Policy output activations by task. $\odot$ represents element-wise multiplication and $\tanh$ is applied elementwise.}
    \label{tab:output_activations}
\end{table}

\paragraph{Optimiser settings.}
Each policy is trained with the Adam optimiser, gradient clipping at $\|\nabla\|_2 \le 1$, and an exponential learning-rate decay schedule. The full hyperparameters are given in \Cref{tab:policy_training}.

\begin{table}[h]
\centering
\begin{tabular}{lccccccc}
\toprule
Task & \# iters & batch size & schedule & init LR & decay rate & decay every & $(\beta_1, \beta_2)$ \\
\midrule
Location finding  & $20{,}000$ & $1024$ & exp decay & $5\times 10^{-5}$ & $0.98$ & $1000$ & $(0.8, 0.998)$ \\
Stochastic pendulum & $500$ & $1024$ & constant & $1\times 10^{-4}$ & N/A & N/A  & $(0.9, 0.999)$ \\
Cart-pole          & $500$ & $1024$ & constant & $1\times 10^{-4}$ & N/A & N/A  & $(0.9, 0.999)$ \\
Double link        & $500$ & $1024$ & constant & $1\times 10^{-4}$ & N/A & N/A  & $(0.9, 0.999)$ \\
Gravimetry        & $10000$ & $1024$ & exp decay & $5\times 10^{-5}$ & $0.98$ & $1000$  & $(0.8, 0.998)$ \\
\bottomrule
\end{tabular}
\caption{Policy training hyperparameters. All optimisers are Adam with gradient clipping at $\|\nabla\|_2 \le 1$.}
\label{tab:policy_training}
\end{table}

\subsection{Score matching approach}
\label{app:method_score}

\paragraph{Score network architecture.}
For all tasks we use the same backbone architecture, differing only in input/output dimensionalities, the presence of positional encodings, and the choice of inputs to the final output head (detailed below). The backbone is a stack of pre-norm transformer blocks \citep{vaswani_attention_2017} with two prediction heads, one for $\nabla_{y_t} \log p(\y\mid\x)$ and one for $\nabla_{\xi_t} \log p(\y\mid\x)$. The exact configuration is given in \Cref{tab:score_arch} and each component is detailed below. 

\emph{Embedding.} Each $y_t$ and $\xi_t$ is first standardised then separately embedded with a random Fourier features layer \citep{tancik_fourier_2020} (with $\sigma=10$) into $64$-dimensional features each. These are concatenated and passed through a $2$-layer MLP with $[192, 128]$ hidden units and GeLU activations. A linear projection maps these to the $256$-dim model space, giving a sequence of $T$ per-timestep embeddings.

\emph{Positional encodings.} When positional encodings are required (see below), a sinusoidal encoding \citep{vaswani_attention_2017} is added to the per-timestep embeddings.

\emph{Backbone.} A learnable global token is joined to the sequence, making its length $T+1$. The sequence is then processed by $4$ transformer blocks, each consisting of layer norm, multi-head self-attention over the time dimension ($8$ heads), residual, layer norm, a $2$-layer MLP with hidden width $2 \cdot d_{\mathrm{model}}$ and GeLU activations, and a final residual.

\emph{Prediction heads.} Following the main transformer backbone, two independent prediction heads produce the $y$-score and $\xi$-score components. These heads consist of a layer norm followed by a $2$-layer MLP with $d_{\mathrm{model}}$ and GeLU activations. The prediction heads receive different inputs depending on the task, and use a separate ``head embedding'' --- an independent RFF + MLP embedding of the raw $(y_t, \xi_t)$ pair. The inputs are constructed as follows:
\begin{itemize}
    \item \emph{standard inputs} (location finding): the head input is the concatenation of [global token, head embedding at $t$, transformer backbone output at $t$].
    \item \emph{markov window inputs} (dynamical systems): the head input additionally concatenates the head embeddings at $t-1$ and $t+1$ (zero-padded at the boundaries). This Markov window is motivated by the Markovian likelihoods of the dynamical systems since the conditional score at time $t$ has explicit dependence on its one-step neighbours. This setup saves the backbone output from having to reconstruct this information through the attention layers.
\end{itemize}

\emph{Conservative wrapper.} The two-headed transformer described above produces a $d_y + d_\xi$-dimensional vector at each time step. We project this to a scalar potential $\Phi_\psi(\y, \x)$ via a final linear layer, and define the score network as $s_\psi(\y, \x) = \nabla_{(\y, \x)} \Phi_\psi(\y, \x)$, obtained by automatic differentiation. This ensures the learned score is conservative (i.e. it is the gradient of a scalar function), a structural property that the true score $\nabla_{(\y, \x)}\log p(\y\mid\x)$ satisfies. This is a principled design choice which was found to greatly stabilise policy training on the location finding task and therefore was adopted across all tasks. However, we note that computationally this choice considerably increases the burden of evaluating the score network during policy training. 

\emph{Permutation equivariance.} For tasks with an exchangeable likelihood, $p(\y\mid\x)$ is exchangeable with respect to the ordering of $(y_t, \xi_t)$ pairs, so the score is permutation-equivariant in $t$. We bake this into the network by omitting the positional encoding. For the dynamical systems the likelihood is Markovian and the score is not equivariant, therefore we add sinusoidal positional encodings to the per-timestep embeddings.

\begin{table}[h]
\centering
\caption{Score network architecture, shared across all tasks (excluding positional encodings for non-iid tasks and task specific inputs to the output heads). The full network has approximately 3 million parameters.}
\label{tab:score_arch}
\begin{tabular}{lll}
\toprule
Component & Hyperparameter & Value \\
\midrule
RFF embedding & dimension & $64$ (i.e.\ $128$ after concat) \\
              & $\sigma$ & $10.0$ \\
Embedding MLP & hidden widths & $[192, 128]$ \\
              & activation function & GeLU \\
Transformer   & model dim $d_{\mathrm{model}}$ & $256$ \\
              & \# transformer blocks & $4$ \\
              & \# attention heads & $8$ \\
              & MLP hidden width & $2\cdot d_{\mathrm{model}} = 512$ \\
              & activation function & GeLU \\
              & global summary token & yes (learnable) \\
Head embedding & RFF dim & $64$ \\
               & MLP hidden widths & $[192, 128]$ \\
Head MLP       & hidden width & $d_{\mathrm{model}} = 256$ \\
               & activation & GeLU \\
Potential proj. & final linear to scalar $\Phi_\psi$ & yes \\
Score          & $s_\psi = \nabla_{(\y, \x)} \Phi_\psi$ (conservative) & yes \\
\bottomrule
\end{tabular}
\end{table}

\paragraph{Score training.}
The score network is trained on the marginal score-matching loss \Cref{eqn:msm_loss} with the Adam optimiser \citep{kingma_adam_2017} and a warmup-cosine learning-rate schedule that linearly warms the learning rate up over the first $0.1\%$ of training to a peak value, then cosine-decays to a small floor of $10^{-5}$. The standard score-matching loss is augmented with a per-coordinate weighting on the $y$-score component (denoted $\lambda_y$ in \Cref{tab:score_training}) chosen so that the $y$-score loss is comparable in magnitude to the $\xi$-score loss; this is task-dependent because the relative scales of the $y$- and $\xi$-score components differ. Designs for the score-training data distribution are sampled from simple distributions spanning the relevant region of $\xi$-space, discussed in greater detail below. All training hyperparameters are summarised in \Cref{tab:score_training}. The number of training iterations is dictated for each experiment by the allowed NLE budget, which we describe in Appendix \ref{app:nle}.

\begin{table}[h]
\centering
\begin{tabular}{lccccc}
\toprule
Task & batch size & peak LR & grad-norm clip & $\lambda_y$ \\
\midrule
Location finding   & $1024$ & $2 \times 10^{-4}$ & $600$  & $30.0$   \\
Stochastic pendulum & $1024$ & $2 \times 10^{-4}$ & $5{,}000$  & $0.0125$ \\
Cart-pole         & $1024$ & $2 \times 10^{-4}$ & $600$  & $0.004$   \\
Double link        & $1024$ & $2 \times 10^{-4}$ & $10{,}000$ & $0.02$   \\
Gravimetry       & $1024$ & $2 \times 10^{-4}$ & $5{,}000$ & $10000$  \\
\bottomrule
\end{tabular}
\caption{Score network training hyperparameters. Optimiser is Adam with default $(\beta_1, \beta_2) = (0.9, 0.999)$. Learning rate uses linear warmup over $0.1\%$ of training iterations to the peak value, then cosine decay to $10^{-5}$.}
\label{tab:score_training}
\end{table}

\paragraph{Score-based policy training.}
Once the score network is trained, the policy is optimised using the EIG gradient estimator of \Cref{eqn:adaptive_design_grad_prop}, with the intractable score terms replaced by $s_\psi$. The policy network and its optimiser settings are shared with the baselines and described in \Cref{app:policy_nets}.

\subsubsection{Design sampling distributions}
\label{app:design_sampler_choices}

As described in \Cref{sec:method}, our marginal score matching approach samples designs from a distribution $q(\xi)$ which we are free to choose. In the following we describe the distributions used in each experiment, while in Appendix \ref{app:design_sampler_abl} we ablate a wider selection of sampling distributions to illustrate the influence of this choice. In each case the chosen samplers reflect our a-priori beliefs about the regions of design space which may be likely under an optimal policy.

\paragraph{Location finding.} Here we use a temporally correlated Gaussian distribution with uniform hyperpriors on temporal correlation and random scale. Specifically, for each sampled trajectory we first draw a scale $\sigma \sim \mathcal{U}(0.2, 5.0)$ and a temporal correlation coefficient $\rho \sim \mathcal{U}(0.7, 1.0)$, and then sample the full design sequence $\xi = (\xi_1, \dots, \xi_T)$, with each $\xi_t \in \mathbb{R}^d$, from the zero-mean Gaussian
\begin{equation}
    \xi \sim \mathcal{N}\!\left(0,\ \sigma^2\, K_\rho \otimes I_d\right), \qquad [K_\rho]_{s,t} = \rho^{\lvert s - t\rvert},
\end{equation}
where $K_\rho \in \mathbb{R}^{T \times T}$ is an AR(1) correlation matrix over the $T$ experiment steps and the $d$ spatial coordinates of each design are sampled independently. This construction gives designs that are marginally $\mathcal{N}(0, \sigma^2 I_d)$ but whose successive locations are smoothly correlated in time: $\rho \to 1$ yields near-constant trajectories while smaller $\rho$ produces more rapidly varying ones.

\paragraph{Dynamical systems.} Here we draw design trajectories from an equal mixture of two distributions. Firstly, a simple i.i.d. uniform distribution on the design space $[-B, B]$. Secondly, a `pseudo-random' design distribution which concentrates designs on the boundary by randomly switching between each boundary according to a Poisson process. Specifically, for each trajectory and each design dimension we sample a sign process $s_t \in \{-1, +1\}$: the initial sign $s_1$ is drawn uniformly, and at each subsequent step the sign flips with probability $1 - e^{-\lambda}$, where the switch rate is drawn per trajectory as $\lambda \sim \mathcal{U}(0.02, 0.1)$. This is a discrete-time analogue of a Poisson switching process, so on average the design remains pinned to one boundary for a geometrically distributed number of steps before flipping to the other. In practice we apply a small jitter to sit just inside the boundary by adding standard Gaussian noise around a nominal boundary level $u_B$ for which $\tanh(u_B) \approx 1$. In other words, we sample $\xi_t = B \tanh(s_t u_B + \varepsilon)$ where $\varepsilon$ is Gaussian noise and $s_t$ is the sign process.

\paragraph{Gravimetry.} Here we sample designs independently from a Gaussian with trajectory-level random scale $\sigma \sim \mathcal{U}[0.3, 1.2]$. Designs are then mapped to the physical space using the output activation $\xi_t = 40 u_t$, similarly to the policy output activation detailed in Appendix \ref{app:policy_nets}.

\subsection{Baselines}
\label{app:method_baselines}

\paragraph{PCE.}
PCE \citep{foster_unified_2020} uses the same policy architecture and training schedule as our score-based approach (see \Cref{tab:policy_training}) but replaces the score-based gradient estimator with the sequential PCE estimator \citep{foster_deep_2021}. The number of contrastive samples for each task is dictated by the allowed NLE budget which we describe in Appendix \ref{app:nle}. 

\paragraph{Pre-trained variational marginal (VarMarg).}
The variational marginal approach \citep{foster_variational_2019} replaces the intractable $\log p(\y\mid\x)$ in the MI form of the EIG with a learned approximation $\log q_\eta(\y\mid\x)$, providing an upper-bound surrogate. The variational approximation $q_\eta$ is a Masked Autoregressive Flow \citep{papamakarios_masked_2017} with rational-quadratic-spline transforms \citep{durkan_neural_2019}: $5$ MAF layers, conditioner networks of depth $2$ and width $80$, and $8$ spline bins on an interval of $[-4, 4]$. The base distribution is a standard Normal of dimension $T(d_y + d_\xi)$, and the flow is trained by maximum likelihood on samples from the model. We train $q_\eta$ using a warmup-cosine-decay learning-rate schedule with peak $10^{-3}$ and floor $10^{-6}$. We sample designs from the same design sampling distributions as the score network (Appendix \ref{app:design_sampler_choices}) and we model the joint $p(\y,\x)$ then subtract the analytic marginal design score to obtain the required conditional. The policy is then trained with $\log q_\eta(\y \mid \x)$ providing an approximation of the intractable marginal. The policy training hyperparameters are identical to those in \Cref{tab:policy_training} and the number of training iterations is dictated by the allowed NLE budget as per Appendix \ref{app:nle}.

\paragraph{Pre-trained variational posterior (VarPost).} In this approach, we pre-train a variational approximation of the posterior distribution $p(\theta|h_T)$ and use the resulting approximation to train the policy by maximising the Barber-Agakov lower bound on $\mathcal{I}_T$ \citep{barber_information_2003}. Again we sample designs from the same design sampling distributions as the score network (Appendix \ref{app:design_sampler_choices}).

\paragraph{Joint variational posterior (JointVarPost).}
The joint variational posterior approach \citep{foster_unified_2020} co-optimises the variational posterior $q_\eta(\theta \mid \y, \x)$ and the policy by maximising the Barber-Agakov lower bound on the EIG.

The variational posterior $q_\eta(z\mid \y, \x)$ is shared between each of the variational approaches. It is a transformer-encoded conditional mixture of Gaussians where the transformer backbone mimics the architecture of the score networks as closely as possible to provide a fair comparison. Specifically, it is identical to the score transformer network presented in Appendix \ref{app:method_score} up until the prediction heads, where here we use per-component heads to predict the means and diagonal log-scales of the $K$ mixture components and the mixing logits. We use $K=10$ components. The network is specialised further in the location finding task where we canonicalise the hidden source ordering by sorting on their first coordinate to maintain identifiability. Furthermore, in the dynamical systems tasks we use positional encodings to capture the permutation-aware structure of the conditioning observations. 

In each variational approach, we train the posterior with batch size $1024$ and one posterior sample per data point per step. The posterior optimiser is Adam with a warmup-cosine schedule (peak $10^{-3}$, floor $10^{-6}$, warmup over $0.1\%$ of training). In the joint approach, the policy optimiser is Adam with a cosine-decay schedule from $10^{-5}$ to $5\times 10^{-6}$.

\paragraph{\iosmc.}
For \iosmc we use the authors' public Jax implementation \citep{iqbal_nesting_2024}. We only modified the diffusion matrices from their set-up in order to avoid the extremely high EIG settings where policy performance is hard to reliably validate. In order to compensate for the lower reward signal in this regime, we experimented with increasing the tempering parameter and reducing the strength of the slew rate penalty on designs, however neither of these provided any noticeable improvements. The hyperparameters used are therefore identical to the original implementation and we varied the number of particles used in order to respect the fixed NLE budget, which is described in Appendix \ref{app:nle}.

\subsection{Evaluation Bounds}
\label{app:evaluation}

For each trained policy we report upper and lower bounds on the true total EIG $\mathcal{I}_T(\phi)$. These bounds are introduced by \citet{foster_deep_2021} and are computed as follows. Let $\theta_0 \sim p(\theta)$ and $\y \sim p_\phi(\y\mid \theta_0)$ denote an outer sample from the model under the policy, and let $\theta_{1:M} \stackrel{\text{i.i.d.}}{\sim} p(\theta)$ denote $M$ independent contrastive samples from the prior.

\paragraph{sNMC upper bound.}
The sequential NMC estimator of \citet{rainforth_nesting_2018, foster_deep_2021} gives the upper bound
\begin{equation}
\label{eqn:snmc}
    \mathcal{I}_{T,M}^{\,\mathrm{sNMC}}(\phi) = \E\left\{ \log\frac{p(\y\mid \theta_0, \x)}{\frac{1}{M}\sum_{m=1}^M p(\y\mid \theta_m, \x)} \right\} \ge \mathcal{I}_T(\phi),
\end{equation}
which is asymptotically tight as $M \to \infty$.

\paragraph{sPCE lower bound.}
The sequential prior contrastive estimator of \citet{foster_deep_2021} gives the lower bound
\begin{equation}
\label{eqn:spce}
    \mathcal{I}_{T,M}^{\,\mathrm{sPCE}}(\phi) = \E\left\{ \log\frac{p(\y\mid \theta_0, \x)}{\frac{1}{M+1}\sum_{m=0}^M p(\y\mid \theta_m, \x)} \right\} \le \mathcal{I}_T(\phi),
\end{equation}
which is asymptotically tight as $M \to \infty$ and bounded above by $\log(M+1)$.

\paragraph{Sample sizes.}
Both bounds are estimated with $M = 1 \times 10^6$ inner (contrastive) samples and $N = 2048$ outer samples for every (policy, task) pair, with the exception of the standard location finding task and the gravimetry tasks, where $M=5 \times 10^6$ contrastive samples were used. With $N=2048$ samples, the standard error of the bound estimates was found to be below the standard error over randomness in the choice of final policy.

\subsection{NLE Calculations}
\label{app:nle_calcs}

\begin{table}[t]
\centering
\begin{tabular}{lll}
\toprule
Method & Upfront & Per-policy \\
\midrule
Score-based         & $K_U B_U T$  & $K_P N T$        \\
PCE           & ---          & $K_P N (M+1) T$  \\
VarMarg & $K_U B_U T$    & $K_P N T$        \\
VarPost & $K_U B_U T$    & $K_P N T$        \\
JointVarPost & ---         & $K_J B_J T$          \\
\iosmc (lower)      & ---          & $K S (NMT + (N{-}1)MT) + NMT$ \\
\iosmc (upper)      & ---          & $K S (NMT + (N{-}1)(MT + MR T(T{+}1)/2)) + NMT + NMR T(T{+}1)/2$ \\
\bottomrule
\end{tabular}
\caption{Per-policy and total-budget NLE formulas for each method. ``Upfront'' is paid once and amortises over $P$ policies. \iosmc costs are stated as lower/upper bounds spanning the IBIS resampling regime; the realised cost depends on the rate at which the degeneracy criterion triggers.}
\label{tab:nle_summary}
\end{table}

To ensure a fair comparison and highlight the benefit of pre-training the score network in our approach, we compare all methods under a fixed budget of \emph{number of likelihood evaluations} (NLE) consumed during training. We define one NLE as a single evaluation of the conditional likelihood $p(y_t \mid \theta, \xi_t, h_{t-1})$, so an evaluation of the joint log-likelihood over the full experiment costs $T$ NLE. We use NLE as a natural cost metric because it provides a comparable measure of how much information each method consumes from the model. For implicit methods (the variational posterior approaches), these methods do not evaluate the model likelihood but do require samples from it, therefore we use the number of samples as an equivalent proxy. 

We use the following common notation: $K_P$ is the number of policy gradient steps, $N$ the policy training outer batch size, $T$ the experiment length, $P$ the number of policies trained, $K_U, B_U$ the number of iterations and batch size for upfront training (applies to score, VarMarg and VarPost methods), $M$ the number of contrastive samples per outer sample for PCE, and $K_J, B_J$ the joint iterations and batch size for the joint variational posterior training. The total cost across $P$ policies is $\text{upfront} + P \cdot \text{per-policy}$. We summarise the calculations in \Cref{tab:nle_summary}.

\paragraph{Score-based (and pre-trained variational).}
Our score-based approach incurs NLE in two phases. The upfront variational approaches share an identical cost structure. The upfront score-training phase performs $K_U$ iterations with batch size $B_U$; each sample requires one evaluation of the per-time-step grad-log-likelihood in the marginal score matching objective (\Cref{eqn:msm_loss}), which we count as $T$ NLE per sample. The upfront cost is therefore $K_U B_U T$. The policy-training phase performs $K_P$ iterations with batch size $N$; each sample requires one evaluation of the joint log-likelihood to compute the conditional-likelihood gradient term of \Cref{eqn:adaptive_design_grad_prop}, costing $T$ NLE. The per-policy cost is therefore $K_P N T$, and total cost across $P$ policies is
\begin{equation}
    C_{\text{score}}(P) = K_U B_U T + P \cdot K_P N T.
\end{equation}
The upfront score-training cost is paid once and amortises across all $P$ policies, giving a key computational advantage when $P$ grows.

\paragraph{PCE.}
Using the PCE \citep{foster_deep_2021}, the policy is trained for $K_P$ iterations with batch size $N$. Each sample requires one evaluation of the joint log-likelihood under the realised $\theta$ ($T$ NLE), plus an inner Monte-Carlo estimate of $p(\y\mid\x)$ using $M$ samples from the prior ($MT$ NLE). The per-policy cost is therefore $K_P N (M+1) T$ and the total is
\begin{equation}
    C_{\text{PCE}}(P) = P \cdot K_P N (M+1) T.
\end{equation}

\paragraph{Co-trained variational posterior.}
The variational-posterior approach \citep{foster_unified_2020} co-trains a posterior $q_\eta(\theta\mid\y, \x)$ and the policy on the Barber-Agakov lower bound. We charge $T$ NLE per outer sample so that joint training for $K$ iterations with batch size $B$ costs
\begin{equation}
    C_{\text{var-post}}(P) = P \cdot K_J B_J T.
\end{equation}
The cost is per-policy because the posterior is specialised to the current policy and is not re-used across restarts.

\paragraph{\iosmc.}
\iosmc \citep{iqbal_nesting_2024} implements a particle filter (IBIS) within a particle filter (CSMC) within Markovian score climbing. Working from the outermost loop inwards: score climbing performs $K$ gradient-based maximisation iterations on the policy parameters, each of which calls \iosmc once to sample $N$ trajectories from a reward-tilted distribution. Note that the policy-parameter gradient is taken through the policy density $\nabla_\phi \log \prod_{t=1}^T \pi_\phi(\xi_t \mid z_{0:t-1})$, not through the dynamics likelihood, so the outer optimisation itself contributes no likelihood evaluations.

The inner \iosmc algorithm uses IBIS to approximate the posterior over $\theta$ at each step with $M$ particles. Each IBIS step costs at minimum $M$ NLE. If a degeneracy criterion is met at step $t$, $R$ move-resample steps cost an additional $MRt$ NLE. The per-step IBIS cost is therefore between $M$ and $M(1 + Rt)$. Particle re-weighting in the CSMC kernel itself costs $M$ NLE per trajectory per step. Summing over $T$ steps and $N$ trajectories (IBIS is run for $N{-}1$ trajectories since the reference trajectory is inherited), and accounting for the one-off cost of initialising the reference trajectory via unconditional SMC, the total cost of \iosmc with $S$ CSMC kernel applications per gradient step satisfies
\begin{align}
    C_{\text{IO-SMC}^2}^{\text{lower}}(P) &= P \cdot \big[K S (NMT + (N{-}1) MT) + NMT\big], \\
    C_{\text{IO-SMC}^2}^{\text{upper}}(P) &= P \cdot \Big[K S \big(NMT + (N{-}1)(MT + MR T(T{+}1)/2)\big) + NMT + NMRT(T{+}1)/2\Big].
\end{align}
The realised cost falls between these bounds. In our experiments we track the actual number of evaluations and report these additionally. The original \iosmc paper uses $S = 1$ and $R = 3$.

\begin{table}[t]
\centering
\caption{Location Finding budget configurations. All methods share a fixed NLE budget of exactly $6.144\times10^{9}$; PCE$^{*}$ uses $10\times$ this budget ($6.144\times10^{10}$) reflecting more typical values required for this method. Shared fixed values (across all rows): experiment length $T=30$, policy-training iterations $K_P=10^{4}$, policy batch size $N=1024$, and upfront/joint batch size $B_U=B_J=1024$. We tabulate only the quantities varied to meet the budget: the number of policies $P$, the upfront/joint training iterations $K_U$/$K_J$, and the number of PCE contrastive samples $M$.}
\label{tab:nle_config_lf}
\begin{tabular}{lrrr}
\toprule
Method & $P$ & $K_U\,/\,K_J$ & $M$ \\
\midrule
Score-based (plus VarMarg, VarPost)   & 1 & 190{,}000 & --- \\
Score-based (plus VarMarg, VarPost)  & 5 & 150{,}000 & --- \\
\midrule
PCE           & 1 & ---       & 19  \\
PCE           & 5 & ---       & 3   \\
PCE$^{*}$     & 1 & ---       & 199 \\
\midrule
JointVarPost  & 1 & 200{,}000 & --- \\
JointVarPost  & 5 & 40{,}000  & --- \\
\bottomrule
\end{tabular}
\end{table}

\begin{table}[h]
\centering
\caption{Dynamical Systems configurations. All methods share a fixed NLE budget of exactly $1.024\times10^{10}$. Shared fixed values (across all rows): experiment length $T=50$, policy-training iterations $K_P=500$, policy batch size $N=1024$, and upfront/joint batch size $B_U=B_J=1024$. We tabulate only the varied quantities: the number of policies $P$, the upfront/joint training iterations $K_U$/$K_J$, and the number of PCE contrastive samples $M$. Realised \iosmc configurations are reported separately in \Cref{tab:nle_iosmc}.}
\label{tab:nle_config_ds}
\begin{tabular}{lrrr}
\toprule
Method & $P$ & $K_U\,/\,K_J$ & $M$ \\
\midrule
Score-based (plus VarMarg, VarPost)   & 1  & 199{,}500 & --- \\
Score-based (plus VarMarg, VarPost)   & 50 & 175{,}000 & --- \\
\midrule
PCE           & 1  & ---       & 399 \\
PCE           & 50 & ---       & 7   \\
\midrule
JointVarPost  & 1  & 200{,}000 & --- \\
JointVarPost  & 50 & 4{,}000   & --- \\
\bottomrule
\end{tabular}
\end{table}

\begin{table}[t]
\centering
\caption{Gravimetry configurations. All methods share a fixed NLE budget of $4.096\times10^{9}$; PCE$^{*}$ uses $10\times$ this budget ($4.096\times10^{10}$). Shared fixed values (across all rows): experiment length $T=20$, policy-training iterations $K_P=10^{4}$, policy batch size $N=1024$, and upfront/joint batch size $B_U=B_J=1024$. We tabulate only the varied quantities: the number of policies $P$, the upfront/joint training iterations $K_U$/$K_J$, and the number of PCE contrastive samples $M$.}
\label{tab:nle_config_grav}
\begin{tabular}{lrrr}
\toprule
Method & $P$ & $K_U\,/\,K_J$ & $M$ \\
\midrule
Score-based (plus VarMarg, VarPost)   & 1 & 190{,}000 & --- \\
Score-based (plus VarMarg, VarPost)   & 5 & 150{,}000 & --- \\
\midrule
PCE           & 1 & ---       & 19  \\
PCE           & 5 & ---       & 3   \\
PCE$^{*}$     & 1 & ---       & 199 \\
\midrule
JointVarPost  & 1 & 200{,}000 & --- \\
JointVarPost  & 5 & 40{,}000  & --- \\
\bottomrule
\end{tabular}
\end{table}

\begin{table}[t]
\centering
\caption{\iosmc NLE bounds on Dynamical Systems experiments. Here we report the configuration used and the upper and lower bounds on NLE cost. The configuration was chosen to capture the target NLE value of $1.024 \times 10^{10}$ allowed to each other method. We then report the realised NLE value on each experiment. We did not use a $P=50$ variant for \iosmc as the wall-clock time required to train and evaluate the required number of policies using the \iosmc implementation was prohibitive.}
\label{tab:nle_iosmc}
\begin{tabular}{lrr}
\toprule
$(K, N, M, R, S)$ & lower & upper  \\
\midrule
$(25, 1024, 512, 3, 1)$ & $1.34 \times 10^{9}$ & $5.34 \times 10^{10}$ \\
\bottomrule
\end{tabular}

\vspace{1em}

\begin{tabular}{lrrr}
    \toprule
    & Stochastic Pendulum & Cart-Pole & Double Pendulum \\
    \midrule
    Realised NLE & $9.9 \times 10^9 \pm 5.7 \times 10^7$ & $1.5 \times 10^{10} \pm 3.6 \times 10^7$ & $1.8 \times 10^{10} \pm 7.8 \times 10^7$ \\
    \bottomrule
\end{tabular}
\end{table}

\subsection{NLE Values}
\label{app:nle}
In the previous subsection, we detailed the NLE cost structure of each method. In our experimental protocol, all methods were assigned a fixed NLE budget. We presented the results for each method by training either one or multiple policies $P$. For $P>1$ we select the best performing policy based on an initial estimate of its PCE lower bound, and report an independent re-estimate of the EIG bounds for that selected policy. Depending on $P$, we adjusted the number of upfront/joint training iterations $K_U, K_J$ and the number of contrastive samples $M$ for each method (similarly the number of particles $M$ and trajectories $N$ in \iosmc) in order to respect the fixed NLE budget available. This is representative of the procedure one might undertake when selecting a policy network by testing multiple architectures/hyperparameters or by training multiple policies to avoid local optima. We provide full details of the NLE budget and the configurations used to achieve these budgets in \Cref{tab:nle_config_lf,tab:nle_config_ds,tab:nle_config_grav,tab:nle_iosmc}.

\section{Ablation Studies, Further Results and Insights}
\label{app:ablations}

\subsection{MLMC Results}
\label{app:mlmc_results}

We implemented the MLMC approach of \citet{goda_unbiased_2022} and tested this on the standard location finding experiment. MLMC uses a debiasing scheme which randomises the number of inner samples to provide an unbiased estimate of the EIG gradient. In our experiments we found that the resulting gradients are high variance, therefore we increased the number of outer samples per gradient estimate and trained the policy network for longer to allow proper convergence. We obtained a \textbf{lower bound of $7.21 \pm 0.40$ and an upper bound of $7.27 \pm 0.40$ on the $d=2, K=2$ location finding task}. Unfortunately, the implementation took $>24$ hours to train a single policy, therefore we were unable to provide results across all our experimental settings. We provide the experimental settings and approximate NLE cost used for these results in \Cref{tab:mlmc_config}. Note that the configuration we tested far exceeds the NLE cost of the budget for the main results, yet the MLMC performance was still significantly worse.

\paragraph{NLE cost of MLMC.} Each outer sample $n=1,\dots,N$ is assigned a random level $L_n$ drawn from a geometric distribution,
\begin{equation}
  \Pr(L = l) = (1 - 2^{-\tau})\,(2^{-\tau})^{l},
  \qquad l = 0, 1, 2, \dots,
\end{equation}
and the number of inner (nested) samples at level $l$ is $M_0 2^{l}$. The expected number of inner samples per outer sample is therefore
\begin{equation}
  \mathbb{E}\!\left[M_0 2^{L}\right]
  = M_0 (1 - 2^{-\tau}) \sum_{l=0}^{\infty} \left(2^{1-\tau}\right)^{l}
  = M_0 \,\frac{1 - 2^{-\tau}}{1 - 2^{1-\tau}},
\end{equation}
which is finite only for $\tau > 1$. We used $\tau = 1.1$, for which this is $\approx 7.96 M_0$. This gives a total theoretical expected NLE cost of $7.96 T K_P N M_0$. 

To compile and parallelise the loop we require a static number of inner samples. We therefore cap the level $L_n$ at $l_\text{max}=12$, which clips a total mass of at most 0.01\% of the tail. The actual realised compute cost is therefore $T N K_P M_0 2^{l_\text{max}}$, in other words equivalent to PCE with $2^{12}M_0 = 4096M_0$ inner samples.

\begin{table}
    \centering
    \caption{Configuration and NLE values for MLMC on the standard location finding experiment. Theoretical NLE gives an expected NLE based on the expected number of inner samples per outer sample in the randomised scheme. Realised NLE reports the actual NLE required due to padding to allow parallelisation.}
    \label{tab:mlmc_config}
    \begin{tabular}{lr}
        \toprule
        $N$ & 2000 \\
        $M_0$ & 1 \\
        $K_P$ & 50000 \\
        $\tau$ & 1.1 \\
        Theoretical NLE & $2.4 \times 10^{10}$ \\
        Realised NLE & $2.0 \times 10^{13}$ \\ 
        \bottomrule
    \end{tabular}
\end{table}

\subsection{Gravimetry Results}
\label{app:extra_gravimetry}

We consider an additional experimental design task inspired by the gravity methods described in \citet{Telford_Geldart_Sheriff_1990} with which one infers the unknown location, depth and strength of an underground void based on measurements of local gravitational anomalies. We consider a one-dimensional search space, where designs $\xi_t \in \mathbb{R}$ are placed on a line and a noisy measurement of the gravitational deviation is observed. This task can be seen as a more challenging development of the source location finding task. In particular, the signal strength---a function of the size and relative density of the underground void---is now unknown and the depth and signal strength can only be independently identified by measurements adjacent to but not directly above the source. We place $T=20$ designs and train an adaptive TNP-style policy network~\citep{pmlr-v162-nguyen22b}. Results are presented in \Cref{tab:grav_1d_results} where again we observe that \scorebed performs on par with PCE and slightly worse than the variational approaches.

\begin{table}
    \centering
    \begin{tabular}{lrr}
        \toprule
        Method & Lower Bound & Upper Bound \\
        \midrule
        \scorebed (P=1) & 8.09 $\pm$ 0.09 & 9.31 $\pm$ 0.11 \\
        \scorebed (P=5) & 8.42 $\pm$ 0.16 & 9.49 $\pm$ 0.27 \\
        PCE (P=1) & 8.43 $\pm$ 0.06 & 8.78 $\pm$ 0.09 \\
        PCE (P=5) & 8.37 $\pm$ 0.06 & 8.85 $\pm$ 0.12 \\
        JointVarPost (P=1) & \textbf{8.86 $\pm$ 0.04} & 10.20 $\pm$ 0.15 \\
        JointVarPost (P=5) & \textbf{8.80 $\pm$ 0.04} & 10.54 $\pm$ 0.48 \\
        VarMarg (P=1) & 7.51 $\pm$ 0.13 & 8.19 $\pm$ 0.15 \\
        VarMarg (P=5) & 7.65 $\pm$ 0.06 & 8.55 $\pm$ 0.14 \\
        VarPost (P=1) & \textbf{8.85 $\pm$ 0.05} & 9.72 $\pm$ 0.07 \\
        VarPost (P=5) & \textbf{8.87 $\pm$ 0.04} & 9.91 $\pm$ 0.15 \\
        \bottomrule
    \end{tabular}
    \caption{Results on the 1D gravimetry task. Each approach is allowed an NLE budget of $4.096 \times 10^{9}$. Results are mean ± one standard error over policy repeats with bold indicating statistically significant best lower bound at 95\% confidence level.}
    \label{tab:grav_1d_results}
\end{table}

\subsection{Choice of design sampling distribution}
\label{app:design_sampler_abl}

\begin{table}
    \centering
    \begin{tabular}{lrr}
        \toprule
        Sampler & Lower Bound & Upper Bound \\
        \midrule
        Gaussian & 2.86 $\pm$ 0.26 & 2.86 $\pm$ 0.26 \\    
        Random scale Gaussian & \textbf{10.84 $\pm$ 0.08} & 10.97 $\pm$ 0.09 \\
        Temporal correlation & 10.81 $\pm$ 0.06 & 10.95 $\pm$ 0.07 \\
        Low rank projection & \textbf{11.01 $\pm$ 0.02} & 11.18 $\pm$ 0.02 \\
        \bottomrule
    \end{tabular}
    \caption{Ablation results for design sampler choice on the $d=2, K=2$ location finding task. Final policies are obtained using the $P=1$ protocol. Results present mean ± one standard error with bold indicating statistically best lower bound at the 95\% confidence level.}
    \label{tab:abl_lf_design_sampler}
\end{table}

% \begin{figure}
%     \centering
%     \includegraphics[width=0.5\textwidth]{figures/camera_ready_appendix/lf_sampler_random_projection_custom.pdf}
%     \caption{Low rank projection samples for location finding task.}
%     \label{fig:lf_low_rank_designs}
% \end{figure}

Here we explore the importance of the design sampling distribution $q(\xi_{1:T})$ used during score network training. We provided details of the sampling distributions used in our main results (see \Cref{sec:experiments}) in Appendix \ref{app:design_sampler_choices}. Here we present additional results using different choices of design sampling distribution.

\paragraph{Location finding.} Here we consider the influence of the design sampling distribution used during score network training for the $d=2, K=2$ location finding task. We consider four different choices of design sampler. Firstly, a standard isotropic Gaussian sampler which simply samples i.i.d. designs $\xi_t \sim \mathcal{N}(0, 1), t=1, ... T$, which represents the simplest possible choice for this experiment. Secondly, we augment the standard Gaussian with a random scale sampled uniformly per trajectory from $\mathcal{U}[0.5, 2]$. Thirdly, we consider a handcrafted design distribution which was specifically created to mimic the behaviour of an optimal DAD policy, which typically exhibits a spiralling behaviour. This was achieved by sampling trajectories as low-rank projections of Gaussian noise, $\xi = \Phi z$, where $\Phi \in \mathbb{R}^{T \times r}$ is a time-basis matrix and $z \in \mathbb{R}^{r \times d}$ are Gaussian coefficients, $z_{ij} \sim \mathcal{N}(0, s^2)$ with an overall scale drawn randomly as $s \sim \mathcal{U}[0.5, 2]$. For each sample the basis $\Phi$ is drawn uniformly from a family of temporal bases --- sinusoidal, Fourier, or spirals --- and the projection rank $r$ is drawn uniformly in $[2, 8]$, so that designs are smooth, low-dimensional curves in time whose complexity varies across samples. Finally, for reference we include the temporally correlated random scale Gaussian which was used in our main results. 

We present the results of the ablation in \Cref{tab:abl_lf_design_sampler}. We find that when the score network is trained on the plain isotropic Gaussian, it fails to train successful policies. The other design sampler choices perform similarly, with marginally best performance from the low rank projection sampler. From our experiments, we found that the critical ingredient for this task is the hyperprior on the sampler scale. This is critical because at initialisation, the policy network emits designs on a scale much smaller than a standard Gaussian and the scale then grows as the policy trains. Including a hyperprior on design scales better covers the design space in the regions that the policy visits during training. The custom low rank projection sampler gives the best performance although it required substantial customisation to do so.

\subsection{Score training stability}
\label{app:score_training_stability}

\begin{figure}[t]
    \centering
    \begin{subfigure}[b]{0.45\textwidth}
        \centering
        \includegraphics[width=\textwidth]{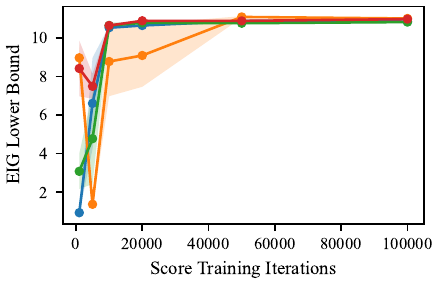}
    \end{subfigure}
    \hspace{2em}
    \begin{subfigure}[b]{0.45\textwidth}
        \centering
        \includegraphics[width=\textwidth]{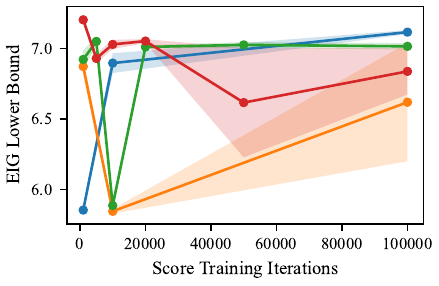}
    \end{subfigure}
    \caption{Stability of \scorebed with respect to score training. Each curve presents the final policy EIG under the $P=1$ protocol against number of score training iterations. Left:  $d=2, K=2$ location finding. Right: cart-pole.}
    \label{fig:score_stability}
\end{figure}

We investigated the stability of \scorebed with respect to score training seed. \Cref{fig:score_stability} plots the final policy EIG under the $P=1$ protocol for multiple score network training seeds on the standard location finding and cart-pole tasks. While \scorebed is clearly very stable on the location finding task, we observe instability early in the score network training, as noted in \Cref{sec:experiments}. In order to compensate for this, we split the allowed score budget between multiple score networks, therefore guarding against this instability. We believe that this instability is more precisely attributed to unstable policy training due to bias in the \scorebed EIG gradient approximation, which we explore in the following section.

\subsection{Additional Bias-Variance Results}
\label{app:bias_var_app_results}

We now extend the discussion from \Cref{sec:exp_bias_var} by exploring the bias of the EIG gradients of \scorebed and PCE on a larger set of tasks. \Cref{fig:eig_mse_vs_N} presents error curves of \scorebed against $N$ for increasing score training iterations on both location finding tasks and two dynamical systems tasks. It is immediately clear that \scorebed achieves considerably lower bias than PCE on both location finding tasks, while \scorebed carries greater bias on both the stochastic pendulum and cart-pole tasks compared with PCE. As per our bias--variance decomposition and discussions in \Cref{sec:exp_bias_var}, we believe that gradient bias is a key factor in the quality of policy training as large bias can throw off policy training by pushing the policy in suboptimal directions. Indeed, the scatter plots in \Cref{fig:eig_scatters_all} show the correlation between EIG gradient bias and final policy performance, illustrating that lower bias is associated with stable policy performance. We see that the bias of \scorebed on the dynamical systems tasks may be on the verge of stable performance, but by selecting policies between multiple score training seeds in the $P>1$ protocol, we are still able to avoid this instability. 

In order to investigate the cause of this bias on the dynamical systems tasks, we explored correlation between score MSE and the conditioning of the Jacobian $J_\xi(y) = \frac{\partial y}{\partial \xi}$ appearing in the gradient bias (cf. \cref{eqn:app_data_error_cauchy_schwarz}), see \Cref{fig:jacobian_all}. We found that in the trained (final) policy regime, trajectories have significantly worse Jacobian conditioning and there is apparent correlation between score MSE and Jacobian conditioning. This is unsurprising since the non-Markovian property of the dynamical systems models makes them particularly sensitive to perturbations early in the trajectory, and it suggests the bound on EIG gradient error in \cref{eqn:app_data_error_cauchy_schwarz}--- equivalently the constant $L_y$ in \cref{eqn:component_separate_err_bounds}---may be such that the score networks are required to be extremely accurate on this task to provide sufficiently low EIG gradient bias.

\begin{figure}[h]
    \centering
    \begin{subfigure}[b]{0.245\textwidth}
        \centering
        \includegraphics[width=\textwidth]{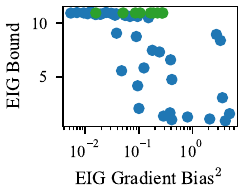}
    \end{subfigure}
    \hfill
    \begin{subfigure}[b]{0.245\textwidth}
        \centering
        \includegraphics[width=\textwidth]{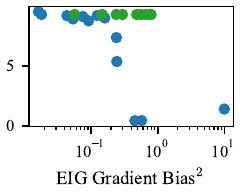}
    \end{subfigure}
    \hfill
    \begin{subfigure}[b]{0.245\textwidth}
        \centering
        \includegraphics[width=\textwidth]{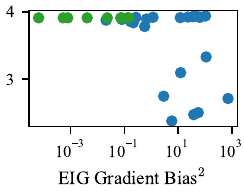}
    \end{subfigure}
    \hfill
    \begin{subfigure}[b]{0.245\textwidth}
        \centering
        \includegraphics[width=\textwidth]{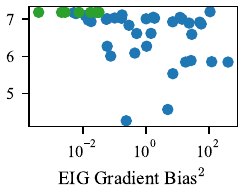}
    \end{subfigure}
    \caption{Scatter plots showing correlation between EIG gradient bias and final policy performance, with points representing different score training budgets/seeds (blue) and different numbers of PCE inner samples (green). From left to right: location finding ($d=2, K=2$), location finding ($d=3, K=10$), stochastic pendulum, cart-pole. }
    \label{fig:eig_scatters_all}
\end{figure}

\begin{figure}
    \centering
    \begin{subfigure}[b]{0.48\textwidth}
        \centering
        \includegraphics[width=\textwidth]{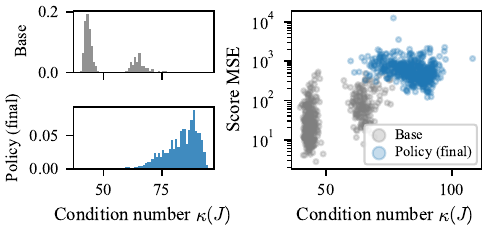}
    \end{subfigure}
    \hfill
    \begin{subfigure}[b]{0.48\textwidth}
        \centering
        \includegraphics[width=\textwidth]{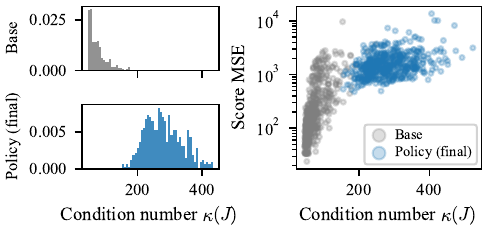}
    \end{subfigure}
    \caption{Histograms showing the distribution of Jacobian condition number between the score training distribution and the distribution induced by the final policy network. Scatter plots show correlation between score MSE and Jacobian condition number. Left: stochastic pendulum. Right: cart-pole.}
    \label{fig:jacobian_all}
\end{figure}

\begin{figure}
    \centering
    \begin{subfigure}[b]{0.4\textwidth}
        \centering
        \includegraphics[width=\textwidth]{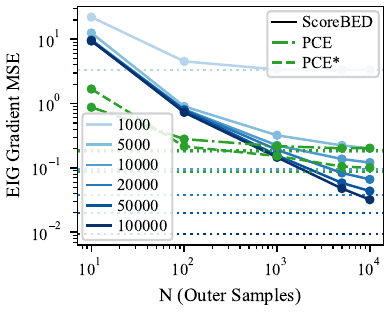}
    \end{subfigure}
    \hspace{2em}
    \begin{subfigure}[b]{0.4\textwidth}
        \centering
        \includegraphics[width=\textwidth]{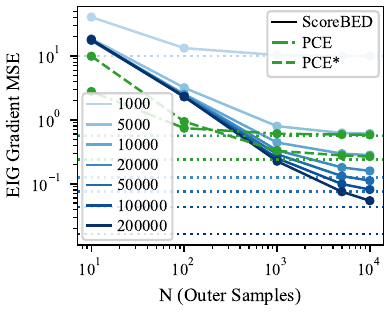}
    \end{subfigure}

    \vspace{\baselineskip}

    \begin{subfigure}[b]{0.4\textwidth}
        \centering
        \includegraphics[width=\textwidth]{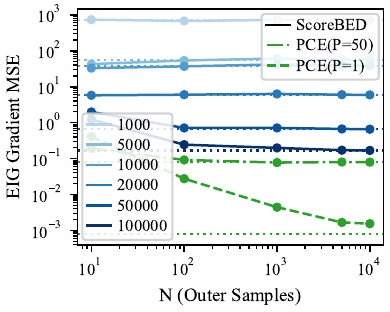}
    \end{subfigure}
    \hspace{2em}
    \begin{subfigure}[b]{0.4\textwidth}
        \centering
        \includegraphics[width=\textwidth]{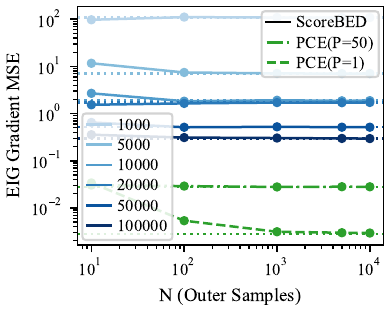}
    \end{subfigure}
    \caption{EIG gradient MSE curves for \scorebed and PCE. PCE uses $M=19$ and PCE$^{\ast}$ uses $M=199$ inner samples on the location finding tasks, while PCE ($P=50$) uses $M=7$ and PCE ($P=1$) uses $M=399$ on the dynamical systems tasks. These values are chosen according to the fixed NLE budgets for each task as per Appendix \ref{app:nle}. From top left to bottom right: location finding ($d=2, K=2$), location finding ($d=3, K=10$), stochastic pendulum, cart-pole.}
    \label{fig:eig_mse_vs_N}
\end{figure}

\end{document}

%% file: def.tex
\usepackage{xspace}

\newtheorem{theorem}{Theorem}

\newcommand{\IG}{\mathrm{IG}}

\newcommand{\E}{\mathbb{E}}
\newcommand{\rmd}{\textrm{d}}

\newcommand{\y}{y_{1:T}}
\newcommand{\x}{\xi_{1:T}}
\newcommand{\Tr}{\operatorname{Tr}}

\newcommand{\score}{s_\psi}

\newcommand{\Cov}{\text{Cov}}

\newcommand{\xx}{\xi^\ast}
\newcommand{\yn}{y_{1:N}}

\newcommand{\iosmc}{IO-SMC$^2$\xspace}
\newcommand{\scorebed}{\textsc{ScoreBED}\xspace}

\DeclareMathOperator*{\argmax}{arg\,max}

\DeclarePairedDelimiter\abs{\lvert}{\rvert}%
\DeclarePairedDelimiter\norm{\lVert}{\rVert}%

\makeatletter
\let\oldabs\abs
\def\abs{\@ifstar{\oldabs}{\oldabs*}}
\let\oldnorm\norm
\def\norm{\@ifstar{\oldnorm}{\oldnorm*}}
\makeatother